\newtheorem{problem}{Problem}
\newtheorem{theorem}{Theorem}
\newtheorem{lemma}{Lemma}
\DeclareMathOperator*{\argmin}{\arg\!\min}
\DeclareMathOperator*{\argmax}{\arg\!\max}
\def\ps@pprintTitle{%
 \let\@oddhead\@empty
 \let\@evenhead\@empty
 \def\@oddfoot{}%
 \let\@evenfoot\@oddfoot}
\begin{document}

\begin{frontmatter}

%% Title, authors and addresses

%% use the tnoteref command within \title for footnotes;
%% use the tnotetext command for the associated footnote;
%% use the fnref command within \author or \address for footnotes;
%% use the fntext command for the associated footnote;
%% use the corref command within \author for corresponding author footnotes;
%% use the cortext command for the associated footnote;
%% use the ead command for the email address,
%% and the form \ead[url] for the home page:
%%
%% \title{Title\tnoteref{label1}}
%% \tnotetext[label1]{}
%% \author{Name\corref{cor1}\fnref{label2}}
%% \ead{email address}
%% \ead[url]{home page}
%% \fntext[label2]{}
%% \cortext[cor1]{}
%% \address{Address\fnref{label3}}
%% \fntext[label3]{}

\dochead{}
%% Use \dochead if there is an article header, e.g. \dochead{Short communication}

\title{Regularized Greedy Column Subset Selection} 

%% use optional labels to link authors explicitly to addresses:
%% \author[label1,label2]{<author name>}
%% \address[label1]{<address>}
%% \address[label2]{<address>}

\author[upmcs]{ Bruno Ordozgoiti*}
\author[upmcs]{ Alberto Mozo}
\author[upmam]{ Jesús García López de Lacalle}
\address[upmcs]{Department of Computer Systems, Universidad
  Polit\'ecnica de Madrid}
\address[upmam]{Department of Applied Mathematics, Universidad
  Polit\'ecnica de Madrid}
\address{*bruno.ordozgoiti@upm.es}

\begin{abstract}
The Column Subset Selection Problem provides a natural framework for
unsupervised feature selection. Despite being a hard combinatorial
optimization problem, there exist efficient algorithms that provide
good approximations. The drawback of the problem formulation is that
it incorporates no form of regularization, and is therefore very
sensitive to noise when presented with scarce data. In this paper we
propose a regularized formulation of this problem, and derive a
correct greedy algorithm that is similar in efficiency to existing
greedy methods for the unregularized problem. We study its adequacy
for feature selection and propose suitable formulations. Additionally,
we derive a lower bound for the error of the proposed problems.
Through various numerical experiments on real and synthetic data, we
demonstrate the significantly increased robustness and stability of
our method, as well as the improved conditioning of its output, all
while remaining efficient for practical use. 
\end{abstract}

\begin{keyword}
%% keywords here, in the form: keyword \sep keyword
Feature selection \sep column subset selection \sep unsupervised
learning
%% MSC codes here, in the form: \MSC code \sep code
%% or \MSC[2008] code \sep code (2000 is the default)

\end{keyword}

\end{frontmatter}

%%
%% Start line numbering here if you want
%%
% \linenumbers

%% main text

\section{Introduction}
\label{sec:intro}

Among dimensionality reduction methods, those that select features instead of transforming them are convenient when preserving the semantic meaning of the variables is necessary. This, together with the
abundance of large, unlabelled data sets, motivates the study of efficient unsupervised feature selection algorithms. Column Subset Selection is a combinatorial optimization problem that translates naturally to this purpose. It can be formulated as follows.

\begin{problem}{Column Subset Selection Problem (CSSP).}\label{def:cssp}
Given a matrix $A \in \mathbb{R}^{m\times n}$ and a positive integer
$k$ smaller than the rank of $A$, let $\mathcal{A}_k$ denote the set of
$m\times k$ matrices comprised of $k$
columns of $A$. Find
\begin{equation}
\label{eq:cssp}
\underset{C \in \mathcal{A}_k}\argmin\|A-CC^+A
\|_F 
\end{equation}
where $C^+$ is the Moore-Penrose pseudoinverse of
$C$.
\end{problem}

This objective is akin to the singular value decomposition, but we constrain the basis vectors to be a selection of the columns originally present in our data matrix. It attains its minimum at the matrix
$C$, composed by a subset of $k$ columns of $A$, with which we can best approximate the rest of the columns of $A$.

Problem \ref{eq:cssp} is a hard combinatorial problem, known to be
UG-hard \cite{ccivril2014column}. Only very recently did a claimed
proof of NP-completeness of the decision version appear \cite{shitov2017column}. Therefore,
for practical applications it is interesting to find efficient approximation algorithms. One example is a greedy algorithm, which in addition to having approximation guarantees
\cite{altschuler2016greedy}, works very well in practice and can be implemented very efficiently \cite{farahat2011efficient}.

An important drawback of problem \ref{eq:cssp} is that the approximation $CC^+A$ is unregularized, and therefore the coefficient matrix $C^+A$ can grow unbounded without incurring any penalty. This means
that if the matrix $A$ contains contingent quirks (e.g. noisy measurements), any algorithm might yield spuriously expressive column subsets, which might later perform badly on new data.

Our purpose in this paper is to endow the CSSP with a regularization penalty so that column subsets leading to parsimonious approximations are favored. In addition, we want
to derive an algorithm that is comparably efficient to the existing greedy method for the CSSP, and correctly optimizes each subproblem corresponding to the greedy approximation.

The main contributions of this paper can be summarized as follows:
\begin{itemize}
\item We propose a regularized formulation of the column subset selection problem.
\item We show that assuming certain conditions are met, there exists
  an algorithm that solves the greedy optimization objective at
  iteration $t$ in $O(\min\{np,
  n^2\})$ time complexity, where $p=\max\{m,t\}$.
In addition, we show that a procedure can be developed to ensure that said conditions are met as we build the set $S$ incrementally, thus allowing us to derive an efficient, correct greedy algorithm for
problem.
\item We discuss how this approach, if adopted in a naive fashion, can be inadequate for feature selection and propose an alternative objective, as well as an algorithm, to overcome this drawback.
\item We offer a lower bound for the error of the proposed problems, which can serve to inform a stopping criterion.
\end{itemize}

\section{Related work}
\label{sec:related}
Many works have explored the problem of unsupervised feature selection in the past decades, some focusing on the column subset selection problem and others following different approaches. Here we
offer a brief overview.

\subsection{Locality and cluster structure preservation}
Many proposals try to find algorithms that preserve the manifold structure of the data, choosing features that preserve local affinity between data instances.
A seminal work in this field proposes to build a k-nearest neighbor matrix \cite{he2005laplacian}. Then, features preserving the structure of said matrix are kept. The feature scores can be computed
as a function of the Laplacian of the graph encoded by the nearest neighbor matrix. In \cite{zhao2007spectral} this approach is generalized in a framework that applies to both supervised and
unsupervised feature selection. In the    
supervised case, the similarity between instances belonging to the same class is represented with a per-class constant value. Furthermore, the features can be ranked based on their similarity to
the eigenvectors of the normalized Laplacian of the similarity graph. The notion of choosing features by examining their ability to preserve the local connectivity or manifold structure of the data
is explored further in various works, considering a penalty term to exploit unlabelled data in an SVM \cite{xu2010discriminative}, performing sparse regression on the spectral embedding of the data
\cite{cai2010unsupervised},  \cite{zhao2010efficient} or optimizing a discriminator matrix with an $\ell{2,1}$-norm penalty \cite{yang2011l2} \cite{li2012unsupervised}. Many of these approaches were
shown to be realizations of a general framework, as well as ineffective to detect redundancies, by Zhao et al. \cite{zhao2013similarity}.

  The  use of the $\ell{2,1}$-norm penalty on a coefficient matrix is convenient for feature selection, as it promotes row sparsity, and can be found in several other works \cite{hou2011feature},
  \cite{he20122}, \cite{qian2013robust}, \cite{hou2014joint}, \cite{wang2015embedded}, \cite{du2015unsupervised}. Some recent works propose the use of a regularized coefficient matrix to choose features
  that  can reconstruct the entire data set well in a manner that is similar to the problem we tackle in this paper \cite{du2015unsupervised}, \cite{wang2015unsupervised}.

  \subsection{Column subset selection}
  The column subset selection problem (CSSP) has also received significant attention over the last few years. Its origins can be traced back to factorizations with column pivoting to
  isolate well-conditioned column subsets \cite{chan1987rank}. More
  recently, several methods have been proposed to approximate a matrix
  based on a subset of its entries, generally building upon the ideas
  introduced in the seminal
  work of Frieze and Vempala \cite{frieze2004fast}. One such method is the CUR matrix decomposition \cite{mahoney2009cur}. Other methods that explicitly target the CSSP with approximation guarantees
  have been proposed \cite{boutsidis2009improved} \cite{boutsidis2014near}. An adequate alternative for practical applications is the greedy algorithm, which generally provides good subset choices and can
  be implemented very efficiently \cite{farahat2011efficient}. It has been shown that this algorithm also enjoys approximation guarantees \cite{civril2012column} \cite{altschuler2016greedy}. Recently, an
  efficient local-search method was shown empirically to outperform other existing approaches in terms of the objective function \cite{ordozgoiti2016fast}. A lower bound for the recovered matrix norm was
  proved in subsequent work \cite{ordozgoiti2017iterative}.

  Most existing proposals for the CSSP attempt to solve the optimization problem directly. The inconvenient of this approach is that the resulting approximation model is unregularized, and is
  therefore sensitive to irrelevant idiosyncrasies of the input data, like outliers and noise, especially when data are scarce. In this paper we propose the addition of a regularization term to the
  problem formulation, making the solutions less sensitive to undesirable peculiarities of the available data. In addition, we show that the new problem can still be approximately optimized very
  efficiently in a greedy fashion, which makes our method well-suited for practical use.

\section{Greedy column subset selection}
\label{sec:greedy_css}
\paragraph{Notation}
\begin{itemize}
\item $A_{i:}$ is the $i$-th row and $A_{:i}$ the $i$-th column of $A$. 
\item $A_{ij}$ is the entry in the $i$-th row and $j$-th column of matrix $A$.
\item $\mathcal P(S)$: the power set of set $S$
\item $|S|$ is the cardinality of set $S$
\item Given a matrix $A \in \mathbb R^{m \times n}$ and a set $S \subset \{1, \dots, n\}$, $A_S$ is the $m \times k$ matrix comprised of the columns of $A$ whose indices are in the set $S$.
\item If $x_i$ is a vector, then $x_{ii}$ is its $i$-th element.
\item $[n]$ denotes the subset of $\mathbb N$ defined as   $\mathbb N \cap [1,n]$, 
\item Given vectors $x,y$ of the same dimension, $x\circ y$ denotes the element-wise product of $x$ and $y$.
\item Given two matrices $A$ and $B$, $(A~B)$ is the matrix resulting
  from appending the columns of $B$ to $A$. E.g., if $A \in
  \mathbb{R}^{m\times n}, B  \in \mathbb{R}^{m\times  k}$, then $(A~B)
  \in \mathbb{R}^{m\times n+k}$ and consists of the columns of
  both matrices. The notation $ \left ( \begin{array}{c}  A
    \\ B  \end{array} \right )$, is analogous, but denotes row-wise, instead of column-wise,
  concatenation.

\end{itemize}

The greedy maximization of the CSSP objective is based on the
following observation. Let $A \in \mathbb R^{m\times n}$ and let $C$
be a matrix composed by a proper subset of the columns of $A$. Let
$\tilde C$ denote the matrix that results from adding an additional
column of $A$ to $C$, that is $\tilde C = (C~A_{:w})$ for
some column index $w$. Then
\begin{equation}
  \label{eq:greedy_fact}
A-\tilde C \tilde C^+ A = A - CC^+A - E_{:w}E_{:w}^+E
\end{equation}

where $E=A-CC^+A$. This is easily seen considering that $CC^+$ is a projection onto the space spanned by the columns of $C$, and therefore all the columns of $E$ are orthogonal to those of $CC^+A$.

Equation \ref{eq:greedy_fact} implies that we can easily perform a
greedy selection of columns, by updating $A$ at each step to obtain
the corresponding residual $E$ and choosing a column $w$ of $E$ to
minimize $\|E-E_{:w}E_{:w}^+E\|_F^2$. Furthermore, the minimizing
column can be found efficiently. In \cite{farahat2011efficient} it is
shown that 
\begin{equation}
  \label{eq:greedy_choice}
\argmin_i \|E-E_{:w}E_{:w}^+E\|_F^2 = \argmax_i \frac{\|G_{:i}\|_2^2}{G_{ii}}
\end{equation}
where $G=E^TE$. In addition, the values of $\|G_{:i}\|_2^2$ and $G_{ii}$ for all $i$ can be updated efficiently every time we incorporate a new column to our subset.

%The problem that we seek to address in this paper is the following.
%Even though the greedy algorithm is efficient and an effective
%strategy for approximately optimizing the CSSP objective, it can be
%problematic as a method for unsupervised feature selection. The reason
%is that the objective it attempts to optimize is unregularized, and it
%is therefore very sensitive to the presence of noise or outliers in
%the data. In addition, it is unlikely to produce robust models when
%the data are scarce. In the next section we show that this problem can
%be addressed while keeping the efficiency of the greedy approach.
%First, we propose a robust objective with a regularizing penalty that
%arises naturally from the CSSP formulation, and then derive fast
%updates for the greedy construction of the column subset.

%We consider the greedy step
%$$
%\arg\max_{i} \frac{\|E^TE_{:i}\|_2^2}{E^T_{:i}E_{:i}}
%$$

%After adding a column $w$, $E$ is updated as
%$$
%E^{(t+1)} = E - \frac{E_{:w}E_{:w}^T}{E_{:w}^TE_{:w}}E
%$$

%The unregularized update for $G$ (Farahat) can be shown to be,
%$$
%G^{(t+1)} = G - \frac{G_{:w}G_{:w}^T}{G_{ww}}
%$$

\subsection{Regularized greedy column subset selection}
In this section we propose a modification of the CSSP objective to enable regularization. 
The approach is as follows. If we observe that $CC^+A = C(C^TC)^{-1}C^TA$ (whenever the inverse exists), we can consider Tikhonov regularization for linear regression (ridge regression) and
reformulate the CSSP as follows:

\begin{problem}\label{def:rcss}
Given a matrix $A \in \mathbb{R}^{m\times n}$ and a positive integer
$k \leq n$ and $\lambda \in \mathbb R$, let $\mathcal{A}_k$ denote the set of
$m\times k$ matrices comprised of $k$
columns of $A$. Find
\begin{equation}
\label{eq:reg_cssp}
\underset{C \in \mathcal{A}_k}\argmin\|A-C(C^TC+\lambda I)^{-1}C^TA
\|_F 
\end{equation} 
\end{problem}

%\begin{problem}{Regularized Column Subset Selection (RCSS).}
%Given a matrix $A \in \mathbb{R}^{m\times n}$, a positive integer
%$k$ smaller than the rank of $A$, let $\mathcal{A}_k$ denote the set of
%$m\times k$ matrices comprised of $k$
%columns of $A$. Find
%\begin{equation}
%  \label{eq:reg_cssp}
%\argmin_{S \in \mathcal P(\{1, \dots, n\}),|S|=k} \|A-f_A(S,\lambda)\|_F^2
%\end{equation}
%\end{problem}

This problem is equivalent to the CSSP, but using a regularized approximation of the target matrix. By introducing the term $\lambda$, we penalize subsets that would require very large coefficient
matrices. We can therefore tune its value to trade off between goodness of fit and model complexity.

The problem of this approach is that the greedy algorithm described in
section \ref{sec:greedy_css} is no longer applicable. The reason is
that this greedy method, as described by Farahat et al. \cite{farahat2011efficient}, relies
heavily on the fact that $CC^+$ is a projection. The corresponding
term of the regularized formulation, $(C^TC+\lambda I)^{-1}C^T$, ceases to be a projection whenever $\lambda > 0$. Therefore, equations \ref{eq:greedy_fact} and \ref{eq:greedy_choice} no longer hold.

%A naive implementation of a greedy algorithm for the regularized
%objective would require the evaluation of the objective
%\ref{eq:reg_cssp} for all $i$. That is, if $\tilde C_i \in \mathbb R^{m \times t}$ denotes the
%matrix resulting from the addition of column $A_{:i}$ to $C$,
%for each candidate column we would need to compute a matrix inverse,
%resulting in $O(t^3)$ time complexity for $\tilde C_i^T\tilde C_i$ of order $t$, and
%various matrix products of complexity at least $O(mt^2$). Each greedy step
%would therefore require ar least $O(mnt^2)$ operations. 

Before we proceed, we will define the following function in order to simplify our notation.
For a matrix $A\in \mathbb R^{m\times n}$,  
\begin{align*}
f_A:  \mathcal P([n])\times \mathbb R & \rightarrow  \mathbb R^{m\times n}
\\ S,\lambda & \mapsto  f_A(S,\lambda)=A_{S}(A_S^TA_S + \lambda I)^{-1}A_S^TA
\end{align*}
where $\mathcal P([n])$ is the power set of $[n]$.
That is, given a set $S$ and a regularizing term $\lambda$,
$f_A(S,\lambda)$ denotes the approximation of $A$ obtained using the
columns indexed by $S$, regularized using $\lambda$.

Our goal in this paper is to derive an efficient greedy algorithm for problem \ref{def:rcss}. In essence, we need an algorithm that solves the following sub-problem:

\begin{problem}\label{def:rgcs}
Given a matrix $A \in \mathbb{R}^{m\times n}$ and a set $S \subset [n]$, find
\begin{equation}
  \label{eq:rgcs}
\argmin_i \|A-f_A(S \cup \{i\},\lambda)\|_F^2
\end{equation}
\end{problem}

Given a column subset $S$, this problem is solved by finding the best addition to said subset. Therefore, a correct greedy algorithm should solve this problem at each iteration. Obviously,
we can solve problem \ref{def:rgcs} simply by inspecting the value of the objective function for all possible choices of $i$. This method, however, involves considerable computations, 
as each candidate requires (for $|S|=t$) matrix products of complexity $O(mnt)$ and $O(nt^2)$ and a matrix inversion of complexity $O(t^3)$, in addition to the matrix subtraction and the
computation of the norm.   
In sum, each iteration of the resulting algorithm would take
$O(\max\{mn^2t,n^2t^2\})$ operations. The algorithm we propose here
solves problem \ref{def:rgcs} in $O(\min \{np, n^2\})$ time, where $p=\max\{m,t\}$.

\section{Algorithms for regularized greedy column subset selection}
\label{sec:algo}

Let us assume we are running a greedy algorithm that has so far completed $t$ iterations, i.e., we have a set $S$ of $t$ elements.
%Let $P^{(t)} = \left (C (C^TC+\lambda I)^{-1}C^T \right ) ^{(t)} $
%Let $A^{(t)} = P^{(t)} A$
We want to solve problem \ref{def:rgcs}, that is,
$$
\argmin_i \|A-f_A(S\cup \{i\}, \lambda)\|_F^2
$$

As said before, we can of course do this by inspecting the value of this objective for all possible choices. However, this would be computationally costly. If we define $C_i=A_{S\cup \{i\}}$, in order to do this we would
need to compute the inverse of $C_i^TC_i+\lambda I$ for all choices of $i$. We can start our derivation by attempting to circumvent these matrix inversions, which we can accomplish as follows. We define

$$\hat A= \left ( \begin{array}{c}  A \\ \sqrt{\lambda} I \end{array}
\right )$$
and $\hat C_i=\hat A_{S\cup \{i\}}$.

If we observe that
$$
 C_i^T C_i +\lambda I =  \hat C_i^T \hat C_i
 $$ 
then we can take advantage of the following fact. Let us denote
$w=A_{:w}, \hat w=\hat A_{:w}$ (here we use column indices and the
corresponding vectors interchangeably). % For brevity, let us denote $C=C^{(t)}, \tilde C = C^{(t+1)}$, $w=A_{:w}$.
%%%%%%%%%%%%%%%%%%
% Extended inverse
Since $\hat C_w = \hat A_{S \cup \{w\}}= (\tilde C\mbox{ }  \tilde w)$, then it is
well known \cite{lutkepohl1997handbook} that
$$
 (C_w^T  C_w+\lambda I)^{-1} = (\hat C_w^T  \hat C_w)^{-1} = \left ( \begin{array}{cc} (C^TC+\lambda I)^{-1} + \frac{vv^T}{\alpha_w} & \hspace{1em}-\frac{v}{\alpha_w}  \\   -\frac{v^T}{\alpha_w} &
   \hspace{1.8em} \frac{1}{\alpha_w} \end{array} \right )
$$
where $\alpha_w=\hat w^T \hat w - \hat w^T \hat C(C^TC + \lambda I)^{-1} \hat C^T\hat w$ and  $v = (C^TC+\lambda I)^{-1}C^Tw$. (Note that $C^Tw=\hat C^T \hat w$ because the extension of $w$ is multiplied by zero).

%We can therefore avoid explicitly inverting $\C_w^T \tilde C_w +\lambda I$. Our goal, however, is to choose a new column such that $\|A-A^{(t+1)}\|_F^2$ is minimized. We now show that the explicit computation of $A^{(t+1)}$ can be avoided as well.

Relying on this expression for the inverse of $C_w^T C_w +\lambda I$, we can look for a solution to problem \ref{def:rgcs} that does not need to explicitly compute the value of the objective for all
choices of $i$. For brevity, let us define $A^{(t)}= f_A(S, \lambda)$
and $A^{(t+1)}= f_A(S\cup \{w\}, \lambda)$. First, observe that (for a detailed derivation, refer to the appendix)
\begin{align}
  & A^{(t+1)} =   C_w( C_w^T  C_w + \lambda I)^{-1}   C_w^TA \nonumber
  \\ = ~& A^{(t)} + \frac{1}{\alpha_w} (A^{(t)}_{:w} - w  )(A^{(t)}_{:w} - w )^T A
\label{eq:update_At}
\end{align}
%\begin{align}
%  & A^{(t+1)} =  \tilde C(\tilde C^T \tilde C + \lambda I)^{-1}  \tilde C^TA \nonumber
%  \\ =  &  (C\mbox{ } w)(\tilde C^T \tilde C + \lambda I)^{-1}   (C\mbox{ } w)^TA \nonumber
%  \\ =  & C(C^TC+\lambda I)^{-1}C^TA + C\frac{vv^T}{\alpha_w}C^TA  -  w\frac{v^T}{\alpha_w}C^TA - C\frac{vw^T}{\alpha_w}A + \frac{ww^T}{\alpha_w}A \nonumber
%  \\ =  & A^{(t)} + \frac{(A_{:w}A^T_{:w})^{(t)}}{\alpha_w}A  -  w\frac{(A^T_{:w})^{(t)}}{\alpha_w}A - \frac{A^{(t)}_{:w}w^T}{\alpha_w}A + \frac{ww^T}{\alpha_w}A \nonumber
%  \\ = & A^{(t)} + \frac{1}{\alpha_w}\left ( (A^{(t)}_{:w} - w  )(A^{(t)}_{:w} - w )^T \right )A
%\label{eq:update_At}
%\end{align}
Let $d_i=(A^{(t)}_{:i} - A_{:i} ) \in \mathbb{R}^m$. Then the next column choice is yielded by 
\begin{align}
&   \argmin \|A-A^{(t+1)}\|_F^2 \nonumber
\\ =& \argmin_i tr (A^TA) - tr(A^TA^{(t)}) - tr(\frac{1}{\alpha_i}A^Td_id_i^TA) \nonumber
\\&- tr((A^T)^{(t)}A) - tr(\frac{1}{\alpha_i}A^Td_id_i^TA) \nonumber
\\&+ tr((A^T)^{(t)}A^{(t)}) +  tr(\frac{1}{\alpha_i}(A^T)^{(t)}d_id_i^TA) + tr(\frac{1}{\alpha_i}A^Td_id_i^TA^{(t)}) \nonumber
\\&+ tr(\frac{1}{\alpha_i^2}A^Td_id_i^Td_id_i^TA) 
\label{eq:preargmin}
\end{align}

We define $x_i=A^Td_i \in \mathbb{R}^n$,  $\tilde x_i=(A^{(t)})^Td_i \in \mathbb{R}^n$. Dropping irrelevant constants from the previous equality,
\begin{align}  
  &\argmin \|A-A^{(t+1)}\|_F^2 \nonumber
\\=& -2tr(\frac{1}{\alpha_i}A^Td_id_i^TA) +2tr(\frac{1}{\alpha_i}(A^T)^{(t)}d_id_i^TA) + tr(\frac{1}{\alpha_i^2}A^Td_id_i^Td_id_i^TA) \nonumber
\\=& \argmin_i \frac{2}{\alpha_i}\tilde x_i^T  x_i -  \frac{2}{\alpha_i}\|x_i\|_2^2 + \frac{1}{\alpha_i^2}\|d_i\|_2^2\|x_i\|_2^2 \nonumber
\\=& \argmin_i \frac{2}{\alpha_i}\tilde x_i^Tx_i +    (\frac{-2}{\alpha_i} +  \frac{1}{\alpha_i^2}(\tilde x_{ii} - x_{ii})  )x_i^Tx_i + \lambda
\label{eq:argmin}
\end{align}

Equality \ref{eq:argmin} provides a surrogate  of problem \ref{def:rgcs}, i.e., the problem we need to solve at each iteration in order to implement a correct greedy algorithm for problem 
\ref{def:rcss}. In particular, equality \ref{eq:argmin} shows that the solution to the problem \ref{def:rgcs} can be found as a function of $x_i, \tilde x_i$ and $\alpha_i$ for $i=1,\dots, n$. 
Our concern now is to develop a procedure to find the value of these variables at each iteration without incurring too much computational cost. 

%Once we have chosen the new column, we have
We have
\begin{align*}
\tilde x_i^Tx_i &= d_i^TA^{(t)}A^Td_i
\\x_i^Tx_i &= d_i^TAA^Td_i
%\\ \|d_i\|_2^2 &= (A^{(t)}_{:i})^TA^{(t)}_{:i} - 2A^T_{:i}A^{(t)}_{:i} + A^T_{:i}A_{:i}
\end{align*}
For notational convenience, we define the matrices $X, \tilde X$ and $D$, whose columns are the vectors $x_i, \tilde x_i$ and $d_i$, $i=1, \dots, n$ respectively.
First, observe from equality \ref{eq:update_At} that
\begin{align*}
A^{(t+1)}=A^{(t)} + \frac{1}{\alpha_w} d_wd_w^TA
\end{align*}
Hence,
\begin{align}
  A^TA^{(t+1)} = A^TA^{(t)} + \frac{1}{\alpha_w}  x_w x_w^T
  \label{eq:update_atat}
\end{align}
And
\begin{align}
(A^TA)^{(t+1)} = (A^TA)^{(t)} + \frac{1}{\alpha_w} \tilde x_w x_w^T + \frac{1}{\alpha_w} x_w\tilde x_w^T + \frac{1}{\alpha_w^2} x_wd_w^Td_wx_w^T
\label{eq:update_ata}
\end{align}
So finally, 
\begin{align}
 X^{(t+1)} & = A^TD^{(t+1)}  \nonumber
\\ & = A^TD^{(t)} + A^TA + \frac{1}{\alpha_w} x_w x_w^T -A^TA  \nonumber
\\ & = X^{(t)} + \frac{1}{\alpha_w} x_w x_w^T = X^{(0)} + \sum_{i=0}^t \left (\frac{1}{a_w}x_wx_w^T \right )^{(i)}
\label{eq:update_x}
\end{align}
And
\begin{align}
\tilde X^{(t+1)} & = (A^TD)^{(t+1)}  \nonumber
\\ & = (A^TD)^{(t)} + \frac{1}{\alpha_w} \tilde x_w x_w^T + \frac{1}{\alpha_w} x_w\tilde x_w^T + \frac{1}{\alpha_w^2} x_wd_w^Td_wx_w^T  - \frac{1}{\alpha_w} x_w  x_w^T \nonumber
\\ & = \sum_{i=0}^t \left (\frac{1}{\alpha_w} \tilde x_w x_w^T + \frac{1}{\alpha_w} x_w\tilde x_w^T + \frac{1}{\alpha_w^2} x_wd_w^Td_wx_w^T  - \frac{1}{\alpha_w} x_w  x_w^T \right )^{(i)}
\label{eq:update_xt}
\end{align}

Expressions \ref{eq:update_x} and \ref{eq:update_xt} allow us to compute the variables involved in problem \ref{eq:argmin} efficiently as more columns are greedily added to the final set.
Specifically, it is easy to see that
%\begin{align}
%   (x_i^T\tilde x_i) ^{(t+1)} & = x_i^T\tilde x_i \nonumber
%\\& + \left (x_i + \frac{1}{\alpha_w}x_wx_{wi}^T \right )^T  \left( \tilde x_i + \frac{1}{\alpha_w}\tilde x_wx_w^T + \frac{1}{\alpha_w}x_w\tilde x_w^T + \frac{1}{\alpha_w^2}x_wx_w^T(\tilde x_{ww}-x_{ww}) - \frac{1}{\alpha_w}x_wx_w^T \right ) \nonumber
%\\ & = x_i^T\tilde x_i  + \frac{1}{\alpha_w}(x_i^T\tilde x_w x_{wi} + x_i^Tx_w^*(\tilde x_{wi}-x_{wi})  + \tilde x_i^Tx_wx_{wi}) \nonumber
%  \\ & + \frac{1}{\alpha_w^2} (x_i^Tx_w^*x_{wi}(\tilde x_{ww}-x_{ww}) + x_w^Tx_w^*x_{wi}\tilde x_{wi} + x_w^T\tilde x_w(x_{wi})^2 - x_w^Tx_w^*(x_{wi})^2) \nonumber
%  \\ & + \frac{1}{\alpha_w^3} x_w^Tx_w^*(x_{wi})^2(\tilde x_{ww}-x_{ww}) \label{eq:update_xtx}
%\end{align}
\begin{align}  
  & (x_i^T\tilde x_i) ^{(t+1)}  = x_i^T\tilde x_i \nonumber
\\& + \left (x_i + \frac{1}{\alpha_w}x_wx_{wi} \right )^T  \left( \tilde x_i + \frac{1}{\alpha_w}\tilde x_wx_w^T + \frac{1}{\alpha_w}x_w\tilde x_w^T + \frac{1}{\alpha_w^2}x_wx_w^T(\tilde x_{ww}-x_{ww}) - \frac{1}{\alpha_w}x_wx_w^T \right ) \nonumber
\\ & = x_i^T\tilde x_i  + \frac{1}{\alpha_w}(x_i^T\tilde x_w x_{wi} + x_i^Tx_w(\tilde x_{wi}-x_{wi})  + \tilde x_i^Tx_wx_{wi}) \nonumber
  \\ & + \frac{1}{\alpha_w^2} (x_i^Tx_wx_{wi}(\tilde x_{ww}-x_{ww}) + x_w^Tx_wx_{wi}\tilde x_{wi} + x_w^T\tilde x_w(x_{wi})^2 - x_w^Tx_w(x_{wi})^2) \nonumber
  \\ & + \frac{1}{\alpha_w^3} x_w^Tx_w(x_{wi})^2(\tilde x_{ww}-x_{ww}) \label{eq:update_xtx}
\end{align}
\begin{align}
   (x_i^Tx_i) ^{(t+1)} & = x_i^T x_i + \frac{2x_{wi}}{\alpha_w}x_i^Tx_w + \left ( \frac{x_{wi}}{\alpha_w} \right ) ^2 x_w^Tx_w 
\label{eq:update_xx}
\end{align}

All elements on the r.h.s. of equalities \ref{eq:update_xx} and \ref{eq:update_xtx} correspond to iteration $t$. The superindex has been omitted for clarity. Furthermore, in equation
\ref{eq:update_xx} we consider the values for the already chosen
indices to be irrelevant in order to simplify the expression.

The inner products involved in these updates can also be computed efficiently by making use of equalities (\ref{eq:update_x}) and (\ref{eq:update_xt}) as follows:
$$x_i^T x_w :=  \left ( x_i^{(0)} \right ) ^T( x_w)^{(t)} +  \left ( \sum_{j=0}^{t} \left (\frac{1}{a_w}x_wx_{wi} \right )^{(j)} \right ) ^T ( x_w)^{(t)}$$
$$x_i^T\tilde x_w :=  \left ( x_i^{(0)} \right ) ^T(\tilde x_w)^{(t)} +  \left ( \sum_{j=0}^{t} \left (\frac{1}{a_w}x_wx_{wi} \right )^{(j)} \right ) ^T (\tilde x_w)^{(t)}$$
$$\tilde x_i^Tx_w := \sum_{j=0}^{t} \left (\frac{1}{\alpha_w} \tilde x_w x_{wi} + \frac{1}{\alpha_w} x_w\tilde x_{wi} + \frac{\tilde x_{ww}-x_{ww}}{\alpha_w^2} x_wx_{wi}  - \frac{1}{\alpha_w} x_w  x_{wi} \right )^{(j)} (x_w)^{(t)}$$

Based on the equalities presented above, we can state and prove our main result regarding the existence of an efficient algorithm for problem \ref{def:rgcs}. The proof is constructive and
provides the necessary equalities for implementing the algorithm.

\begin{theorem}
  \label{the:main}  
Given a matrix $A\in \mathbb R^{m\times n}$, $\lambda \in \mathbb R$ and a set $S$ of cardinality $t\leq n$, let us assume the following values, as defined above, are known:
\begin{enumerate}
\item $X^{(0)}=-A^TA$
\item $x_{w(i)}^{(i)},\tilde x_{w(i)}^{(i)}, \alpha_{w(i)}^{(i)}, i=0, \dots, t-1$
\end{enumerate}
where $w(i)$ is the index of the $(i+1)$-th column added to the set $S$.
Then there exists an algorithm that solves problem \ref{def:rgcs} in $O(\min \{np, n^2\})$ time, where $p=\max\{m,t\}$.
\end{theorem}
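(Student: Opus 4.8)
The plan is to give a constructive proof that exhibits the algorithm explicitly and then bounds its running time. The central observation is that equality \eqref{eq:argmin} reduces problem \ref{def:rgcs} to evaluating, for each candidate index $i$, the three scalar quantities $\tilde x_i^T x_i$, $x_i^T x_i = \|x_i\|_2^2$ and $\alpha_i$, together with the diagonal entries $x_{ii}$ and $\tilde x_{ii}$. So the proof amounts to showing that all of these can be maintained across iterations at the stated cost, starting from the data in the hypothesis. First I would set up the bookkeeping: from hypothesis (1) and equality \eqref{eq:update_x} we have $X^{(t)} = X^{(0)} + \sum_{i=0}^{t-1}\bigl(\tfrac{1}{\alpha_w}x_w x_w^T\bigr)^{(i)}$, and from the analogous telescoping \eqref{eq:update_xt} we obtain $\tilde X^{(t)}$; crucially, both are stored in factored form as a sum of $t$ rank-one terms plus the fixed matrix $-A^TA$, so no $n\times n$ matrix is ever formed from scratch.

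The main work of the proof is to verify that, given these factored representations, the update formulas \eqref{eq:update_xx} and \eqref{eq:update_xtx} for $(x_i^T x_i)^{(t+1)}$ and $(x_i^T \tilde x_i)^{(t+1)}$ can be applied for all $i$ in the claimed time. Each such update is a fixed-length arithmetic combination of the inner products $x_i^T x_w$, $x_i^T\tilde x_w$, $\tilde x_i^T x_w$, $x_w^T x_w$, $x_w^T\tilde x_w$ and the scalars $x_{wi},\tilde x_{wi},\alpha_w$. The three cross inner products involving the newly chosen column $w$ are exactly the ones expanded in the displayed equalities just before the theorem; each is itself a sum of $O(t)$ rank-one contributions dotted against a length-$n$ vector, so computing it for a single $i$ costs $O(t)$, and for all $n$ candidates $O(nt)$. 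Then forming $(x_i^T x_i)^{(t+1)}$ and $(x_i^T\tilde x_i)^{(t+1)}$ for all $i$ costs $O(n)$ more. The value $\alpha_w$ at each step is obtained from its definition $\alpha_w = \hat w^T\hat w - \hat w^T\hat C(C^TC+\lambda I)^{-1}\hat C^T\hat w$ using the recursive Schur-complement form of $(C^TC+\lambda I)^{-1}$ already displayed in the excerpt, which costs $O(t^2)$ (or, maintaining the inverse incrementally, $O(t^2)$ per step). Adding these, one step costs $O(\max\{nt, mt, t^2\})$, and I would note that $mt \le mn$ and $t^2 \le nt$ since $t\le n$, so this is $O(n\max\{m,t\}) = O(np)$; the alternative bound $O(n^2)$ comes from observing that $t\le n$ forces $nt\le n^2$ and, when $m\le n$, also $mt\le n^2$, so whichever of $np$ and $n^2$ is smaller is an upper bound. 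Finally the surrogate \eqref{eq:argmin} is evaluated and minimized over the $n$ candidates in $O(n)$ time, which is dominated.

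The step I expect to be the main obstacle — or at least the one requiring the most care — is making rigorous the claim that the factored (rank-one-sum) representations of $X^{(t)}$ and $\tilde X^{(t)}$ genuinely suffice, i.e. that we never need the dense matrix and that the telescoping in \eqref{eq:update_x}–\eqref{eq:update_xt} is valid even though $A^{(t)}$ and $d_w$ are defined through the regularized projector $f_A(\cdot,\lambda)$, which is not idempotent. This is where equality \eqref{eq:update_At}, and its corollary $A^{(t+1)} = A^{(t)} + \tfrac{1}{\alpha_w} d_w d_w^T A$, does the heavy lifting: it replaces the lost projection identity \eqref{eq:greedy_fact} and guarantees that each iteration perturbs $A^TA^{(t)}$ and $(A^TA)^{(t)}$ by a bounded number of rank-one terms, per \eqref{eq:update_atat}–\eqref{eq:update_ata}. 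I would therefore structure the proof as: (i) assemble $X^{(t)},\tilde X^{(t)}$ in factored form from the hypotheses; (ii) state the per-iteration update of the needed scalars via \eqref{eq:update_xx}, \eqref{eq:update_xtx} and the inner-product expansions; (iii) count operations to get $O(np)$ and note $O(n^2)$ as the complementary bound; (iv) conclude by plugging into \eqref{eq:argmin}. A closing remark would observe that the same quantities $x_{w(t)}^{(t)},\tilde x_{w(t)}^{(t)},\alpha_{w(t)}^{(t)}$ produced at iteration $t$ are precisely the additional data item (2) needed to run iteration $t+1$, so the invariant is preserved and the greedy algorithm runs correctly and within budget over all $k$ iterations.
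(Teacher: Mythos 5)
Your overall strategy matches the paper's: a constructive proof that maintains the rank-one-sum (factored) representations of $X^{(t)}$ and $\tilde X^{(t)}$, updates the scalars $x_i^Tx_i$, $\tilde x_i^Tx_i$, $x_{ii}$, $\tilde x_{ii}$ via equalities \eqref{eq:update_xx} and \eqref{eq:update_xtx}, and then evaluates the surrogate \eqref{eq:argmin} over all candidates. However, there is one genuine gap in your cost accounting: the surrogate \eqref{eq:argmin} requires $\alpha_i$ for \emph{every} candidate $i=1,\dots,n$ (it appears in the denominators of every term), yet your proposal only explains how to obtain $\alpha_w$ for the single chosen column, via the Schur-complement expression $\alpha_w=\hat w^T\hat w-\hat w^T\hat C(C^TC+\lambda I)^{-1}\hat C^T\hat w$ at cost $O(t^2)$. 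Repeating that computation for all $n$ candidates would cost at least $O(nt^2)$ (and requires $\hat C^T\hat a_i$ for each $i$, adding an $O(mnt)$-type term), which breaks the claimed bound. The paper's proof closes this with the identity $\alpha_i=\lambda-x_{ii}$, which follows from $\hat a_i^T\hat a_i=a_i^Ta_i+\lambda$ and $\hat a_i^T\hat C(C^TC+\lambda I)^{-1}\hat C^T\hat a_i=a_i^TA^{(t)}_{:i}$, so that $\alpha_i=\lambda-a_i^Td_i=\lambda-x_{ii}$; since $x_{ii}$ is already maintained for all $i$ at $O(n)$ per iteration via $x_{ii}^{(t)}=x_{ii}^{(t-1)}+\tfrac{1}{\alpha_w}x_{wi}^2$, all $n$ values of $\alpha_i$ come for free. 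Without this (or an equivalent) observation the complexity claim does not go through.

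A second, more minor point: your intermediate per-step bound $O(\max\{nt,mt,t^2\})$ undercounts the work in the cross inner products. The first term of each expansion is $(x_i^{(0)})^Tx_w^{(t)}$ with $x_i^{(0)}=X^{(0)}_{:i}=-(A^TA)_{:i}$, so computing it for all $i$ amounts to the matrix--vector product $X^{(0)}x_w^{(t)}$, which costs $O(n^2)$ if $X^{(0)}$ is stored, or $O(mn)$ if evaluated as $-A^T(Ax_w^{(t)})$; this is exactly where the paper's $O(\min\{np,n^2\})$ dichotomy comes from. Your final stated bound $O(n\max\{m,t\})$ happens to absorb this term, but the accounting that leads to it omits the dominant contribution, so you should make the $X^{(0)}x_w^{(t)}$ (and $X^{(0)}\tilde x_w^{(t)}$) products explicit in the operation count.
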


\begin{proof}  
In order to relieve the notation, we employ $x_w^{(i)}$ to denote
$x_{w(i)}^{(i)}$.

Equality \ref{eq:argmin} gives an expression that reveals the optimum of the objective function of problem \ref{def:rgcs}. We now show that this expression can be computed in $O(\min \{np, n^2\}))$ time.

First, observe from equalities \ref{eq:update_x} and \ref{eq:update_xt} that the values of $x_w$ and $\tilde x_w$ for this iteration can be computed in $O(nt)$ time complexity as follows:
$$x_w^{(t)} := X^{(0)}_{:w} + \sum_{j=0}^{t-1} \left (\frac{1}{a_w}x_wx_{ww}^T \right )^{(j)}$$
$$\tilde x_w^{(t)} :=  \sum_{j=0}^{t-1} \left (\frac{x_{ww}}{\alpha_w} \tilde x_w  + \frac{\tilde x_{ww}}{\alpha_w} x_w + \frac{\tilde x_{ww}-x_{ww}}{\alpha_w^2} x_wx_{ww}  - \frac{x_{ww}}{\alpha_w} x_w \right )^{(j)}$$

We now define the matrices $W, \tilde W$ whose columns are respectively $x_w^{(0)}, \dots, x_w^{(t-1)}$ and  $\tilde x_w^{(0)}, \dots, \tilde x_w^{(t-1)}$. We also define the diagonal matrix
$$
B=\left (\begin{array}{ccc}  \frac{1}{\alpha_w^{(0)}} &   & 
  \\  & \ddots & 
  \\  &  &   \frac{1}{\alpha_w^{(t-1)}}
\end{array} \right )
$$
and the column-scaled matrices $W_\alpha=WB$, $\tilde W_\alpha=\tilde WB$.

The following equalities can be easily verified (from here on, we use $x_w,\tilde x_w$ to denote $x_w^{(t)},\tilde x_w^{(t)}$).
  $$
  \left ( \sum_{j=0}^{t-1} \left (\frac{1}{a_w}x_wx_{wi} \right )^{(j)} \right ) ^T  x_w^{(t)} = W_{i:}W_\alpha^Tx_w^{(t)}
  $$
  $$  
  \left ( \sum_{j=0}^{t-1} \left (\frac{1}{a_w}x_wx_{wi} \right )^{(j)} \right ) ^T \tilde x_w^{(t)} = W_{i:}W_\alpha^T\tilde x_w^{(t)}
  $$  
  
  Combined with equalities \ref{eq:update_x} and \ref{eq:update_xt}, this implies the following:
  $$(x^T_1x_w^{(t)}, \dots, x^T_nx_w^{(t)})^T = X^{(0)}x_w^{(t)} + WW_\alpha^T x_w^{(t)}$$  
  $$( x^T_1\tilde x_w^{(t)}, \dots, x^T_n\tilde x_w^{(t)})^T = X^{(0)}\tilde x_w^{(t)} +WW_\alpha^T\tilde x_w^{(t)}$$
  And similarly,  
  \begin{align*}    
  &(\tilde x^T_1x_w^{(t)}, \dots, \tilde x^T_nx_w^{(t)})^T = \tilde X^Tx_w^{(t)}
  \\ &=   W\tilde W_\alpha^Tx_w^{(t)} + \tilde WW_\alpha^Tx_w^{(t)} + \alpha^{-1}(\tilde x_w^{(t)}-x_w^{(t)}) WW_\alpha^Tx_w^{(t)} +  WW_\alpha^Tx_w^{(t)} 
  \end{align*}
  
  Since $X^{(0)}\in \mathbb R^{n\times n}$, $W,\tilde W \in \mathbb R^{n\times t}$ and $x_w^{(t)}, \tilde x_w^{(t)} \in \mathbb R^{n\times 1}$, these equalities can be computed in $O(2n^2 + 8nt)=O(n^2)$.
  If $n>m$, instead of storing $X^{(0)}$ we can explicitly compute
  $-A^TAx_w^{(t)}$ and $-A^TA\tilde x_w^{(t)}$, resulting in $O(\max\{mn,nt\})$
  time complexity for the greedy step.

  Using the variables $x^T_ix_w^{(t)}$, $x^T_i\tilde x_w^{(t)}$ and $\tilde x^T_ix_w^{(t)}$, which we have computed for $i=1, \dots, n$,
  we can now compute $\tilde x_i^T x_i$ and $x_i^Tx_i$ as shown in equalities \ref{eq:update_xtx} and \ref{eq:update_xx}. Finally, from equalities \ref{eq:update_x} and \ref{eq:update_xt} and the
  definition of $\alpha_i$, it is easily verified that  
  $$x_{ii}^{(t)} = \left (x_{ii} +
  \frac{1}{(\alpha_w)}x_{wi}^2 \right )^{(t-1)}$$  
  $$\tilde x_{ii}^{(t)} = \left ( \tilde x_{ii} + \frac{2\tilde x_{wi}x_{wi}-x_{wi}^2}{(\alpha_w)}  +  x_{wi}^2\frac{(\tilde x_{ii} - x_{ii})}{\alpha_w^2}\right )^{(t-1)}$$
  $$\alpha_i = \lambda -x_{ii}$$  
  These operations are easily seen to require $O(n)$ time complexity if computed for all $i$.
  
  Having computed all these variables, we can now compute the value of expression \ref{eq:argmin} for all $i$ in a straightforward manner, thus completing the proof.
  
\end{proof}

By theorem \ref{the:main}, if we store the value of $x_w, \tilde x_w$ and $\alpha_w$ at each iteration, we can efficiently find the best column addition for the regularized column subset 
selection formulation. This allows us to derive a greedy algorithm for problem \ref{def:rcss}.

\subsection{Algorithm}
Input: $A \in \mathbb R^{m\times n},k \in \mathbb N, k\leq n$
\begin{enumerate}
\item $X \leftarrow -A^TA$; $\alpha \leftarrow
  \lambda-diag(X)$; $x_w^{(0)} \leftarrow X_{:w}$; $\tilde x_w^{(0)} \leftarrow 0$
\item Compute $x_i^Tx_i, i=1, \dots, n$
\item Choose the first column, $w \leftarrow \argmin_i \frac{-2}{\alpha_i}-\frac{x_{ii}}{\alpha_i^2}x_i^Tx_i + \lambda$
\item $S \leftarrow \{w\}; \omega\leftarrow {\bf 0} \in \mathbb R^n$  
\item for $t=1,\dots, k-1$  
\item \hspace{1em} $x_w \leftarrow x_w^{(t-1)}$
\item \hspace{1em} $\tilde x_w \leftarrow \tilde x_w^{(t-1)}$
\item \hspace{1em} $\beta \leftarrow Xx_w + WW_\alpha^T x_w$ 
\item \hspace{1em} $\gamma \leftarrow X\tilde x_w +WW_\alpha^T\tilde x_w$
\item \hspace{1em} $\delta \leftarrow W\tilde W_\alpha^Tx_w + \tilde WW_\alpha^Tx_w + \alpha^{-1}(\tilde x_w-x_w) WW_\alpha^Tx_w +  WW_\alpha^Tx_w $
\item \hspace{1em} $\begin{aligned}[t]\omega \leftarrow \omega &+ \alpha_w^{-1}(\gamma\circ x_w + \beta\circ (\tilde x_w-x_w) + \delta \circ x_w) 
\\&+ \alpha_w^{-2}(\beta \circ x_w(\tilde x_{ww}-x_{ww}) + \beta_wx_w\circ \tilde x_w + (\gamma_w-\beta_w)x_{w}^2)
\\&+ \alpha_w^{-3}\beta_wx_w\circ x_w(\tilde x_{ww}-x_{ww})
\end{aligned}$  
\item \hspace{1em} $\psi \leftarrow \psi + \frac{2}{\alpha_w}x_w\circ \beta + \frac{\beta_w}{\alpha_w^2}x_w \circ x_w$
\item \hspace{1em} for $i=1,\dots, n$
\item \hspace{1em} \hspace{1em} $x_{ii} \leftarrow x_{ii} + \frac{1}{\alpha_w}x_{wi}^2$
\item \hspace{1em} \hspace{1em} $\tilde x_{ii} \leftarrow \tilde x_{ii} + \frac{2}{\alpha_w} \tilde x_{wi}x_{wi} + \frac{\tilde x_{ii} - x_{ii}}{\alpha_w^2} x_{wi}^2  - \frac{1}{\alpha_w} x_{wi}$
\item \hspace{1em} \hspace{1em} $\alpha_i \leftarrow \lambda -x_{ii}$
\item \hspace{1em} $w\leftarrow \argmin_i \frac{2}{\alpha_i}\omega_i +    (\frac{-2}{\alpha_i} +  \frac{1}{\alpha_i^2}(\tilde x_{ii} - x_{ii})  )\psi_i + \lambda$
\item \hspace{1em} $S \leftarrow S \cup \{w\}$
\item \hspace{1em} $x_w^{(t)} \leftarrow X_{:w} + \sum_{j=0}^{t-1} \left (\frac{1}{a_w}x_wx_{ww}^T \right )^{(j)}$  
\item \hspace{1em} $\tilde x_w^{(t)} \leftarrow  \sum_{j=0}^{t-1} \left (\frac{x_{ww}}{\alpha_w} \tilde x_w  + \frac{\tilde x_{ww}}{\alpha_w} x_w + \frac{\tilde x_{ww}-x_{ww}}{\alpha_w^2} x_wx_{ww}  - \frac{x_{ww}}{\alpha_w} x_w \right )^{(j)}$
\end{enumerate}

% End of main algorithm
%%%%%%%%%%%%%%%%%%%%%%%%%%%%%%%%%%%%%%%%%%%%%%%%%%

\subsection{An appropriate formulation for feature selection}
\label{sec:approximating}

The purpose of this algorithm is
that of selecting a few variables and approximating the rest. In the context of practical applications of feature selection, we can consider that the
chosen variables become available and do not need to be
approximated. However, the penalty of the 
regularized formulation causes the chosen variables to be imperfectly
estimated. This means that in optimizing the objective in problem \ref{def:rgcs}, we are taking
into account an error that should not be made in reality, thus providing
a potentially mistaken choice.

This can be illustrated with an example. Consider the following matrix.
$$
\arraycolsep=5pt
\left( \begin{array}{c c c c}
  1 & 0 & 0 & 1 \\  
  0 & 1 & 0 & 0 \\
  1 & 0 & 1 & 1 \\
  1 & 1 & 0 & 0 \end{array} \right)
$$
and the set $S=\{1,2\}$. The solution to problem \ref{def:rgcs} is given by adding column 4 to $S$. However, if we don't consider the error made in the approximation of the chosen columns, the best
choice is column 3. As stated above, in a feature selection setting we would not want to consider the approximation error of the chosen columns, as we can assume the corresponding variables to be
available.

For this reason, we propose the following alternative problem formulation.

\begin{problem}\label{def:ercss}
Given a matrix $A \in \mathbb{R}^{m\times n}$ and a positive integer
$k$ smaller than the rank of $A$, and defining $\bar S=[n]\backslash S$, find
\begin{equation}
\underset{S,|S|=k}\argmin\|A_{\bar S}-A_S(A_S^TA_S+\lambda I)^{-1}A_S^TA_{\bar S})\|_F^2 
\end{equation} 
\end{problem}

This is equivalent to problem \ref{def:rcss}, but it discards the error made in approximating the chosen variables. Fortunately, the algorithm proposed in section \ref{sec:algo} can be easily modified 
to greedily optimize this objective. We show how by induction on $t$.

At iteration $t$, consider equation \ref{eq:update_At}.
$$
A^{(t+1)}=A^{(t)} + \frac{1}{\alpha_w}d_wx^T
$$

If we assume that the chosen columns (those in $S$) are approximated exactly in
$A^{(t)}$, i.e. $A_S=A^{(t)}_S$, these need not be modified. Therefore, we can simply set
the corresponding positions of $x$ to zero, and these columns will not
be altered. For the next iteration, however, we are adding a new column to $S$, which we can denote $w$. Thus, we need that $A_{S\cup \{w\}}=A^{(t+1)}_{S\cup \{w\}}$.

We have $d_w=A^{(t)}_{:w}-w$. Therefore, if $x_{ww}=-\alpha_w$,
$$
A^{(t)}_{:w} +  \frac{1}{\alpha_w}d_wx_{ww} = A^{(t)}_{:w} -  d_w = w
$$

That is, by setting $x_S=0$ and $x_{ww}=-\alpha_w$ we ensure that the chosen columns are considered to be perfectly approximated.
If we set $x_{ww}^{(0)}=-\alpha_w^{(0)}$, then we ensure that the first
chosen column is approximated with no error, providing the basis for our inductive argument.

By defining $(x^*)^{(t)}$ to be equal to $x^{(t)}$ but with the previously described replacements, we can easily modify our algorithm to ensure that the column chosen at each iteration is the one that
greedily optimizes problem \ref{def:ercss}. We now detail the necessary modifications:
\begin{enumerate}
\item $x_i^T x_w^* :=  \left ( x_i^{(0)} \right ) ^T(\tilde x_w^*)^{(t)} +  \left ( \sum_{i=0}^{t-1} \left (\frac{1}{a_w}x_wx_w^T \right )^{(i)} \right ) ^T (\tilde x_w^*)^{(t)}$
\item $\tilde x_i^Tx_w := \sum_{i=0}^{t-1} \left (\frac{1}{\alpha_w} \tilde x_w x_w^T + \frac{1}{\alpha_w} x_w\tilde x_w^T + \frac{1}{\alpha_w^2} x_wd_w^Td_wx_w^T  - \frac{1}{\alpha_w} x_w  x_w^T \right )^{(i)} (x_w^*)^{(t)}$ 
\item $x_w^* := x_w$; $x_{wi}^*=0, i\in S$; $x_{ww}^*=-\alpha_w$
\item $\tilde x_w :=  \sum_{i=0}^t \left (\frac{1}{\alpha_w} \tilde x_w x_w^T + \frac{1}{\alpha_w} x_w^*\tilde x_w^T + \frac{1}{\alpha_w^2} x_w^*d_w^Td_wx_w^T  - \frac{1}{\alpha_w} x_w^*  x_w^T \right )^{(i)}$
\end{enumerate}

Equality \ref{eq:update_xtx}:
\begin{align}  
  &(x_i^T\tilde x_i) ^{(t+1)}  = x_i^T\tilde x_i \nonumber
\\ &+ \left (x_i + \frac{1}{\alpha_w}x_wx_{wi} \right )^T  \left( \tilde x_i + \frac{1}{\alpha_w}\tilde x_wx_w^T + \frac{1}{\alpha_w}x_w\tilde x_w^T + \frac{1}{\alpha_w^2}x_wx_w^T(\tilde x_{ww}-x_{ww}) - \frac{1}{\alpha_w}x_wx_w^T \right ) \nonumber
\\ & = x_i^T\tilde x_i  + \frac{1}{\alpha_w}(x_i^T\tilde x_w x_{wi} + x_i^Tx_w^*(\tilde x_{wi}-x_{wi})  + \tilde x_i^Tx_wx_{wi}) \nonumber
  \\ & + \frac{1}{\alpha_w^2} (x_i^Tx_w^*x_{wi}(\tilde x_{ww}-x_{ww}) + x_w^Tx_w^*x_{wi}\tilde x_{wi} + x_w^T\tilde x_w(x_{wi})^2 - x_w^Tx_w^*(x_{wi})^2) \nonumber
  \\ & + \frac{1}{\alpha_w^3} x_w^Tx_w^*(x_{wi})^2(\tilde x_{ww}-x_{ww}) \nonumber
\end{align}

Equality \ref{eq:update_xx}:
\begin{align}
  &(x_i^Tx_i^*)^{(t+1)}
  \\&= x_i^T x_i^* +\frac{2x_{wi}^*}{\alpha_w}x_i^T\tilde x_w + \left ( \frac{x_{wi}^*}{\alpha_w} \right )^2x_w^Tx_w - \left ( \frac{x_{wi}}{\alpha_w}x_{wS}^Tx_{wS} \right )^2 - (x_{wi} + x_{wi}\frac{x_{ww}}{\alpha_w})^2 \nonumber
\end{align}

%subs = np.sum(xw[S[:-1]].dot(xw.T/alpha)**2, axis=0) + (xw.T + xw.T*xw[w]/alpha)**2 #(Xws[:,i-1] + xws.T*xw[w]/alpha)**2        
%xsxs = xsxs + 2*xws/alpha*Xxws +   xw**2/alpha**2*xwxw - subs.reshape((N,1))

\subsection{Choosing the value of $k$}
\label{sec:choosing}
In order to decide when to stop the algorithm, we can monitor the loss of the approximation at each iteration, i.e. $\|A-A^{(t)}\|_F^2$. Computing this quantity at each iteration, however, can be
costly. Fortunately, we can take advantage of the variables involved in the proposed algorithm to efficiently track the exact value of the loss.

By equation (\ref{eq:update_At}) we can easily see that
\begin{align*}
\|A-A^{(t)}\|_F^2 &= \|A-A^{(t-1)}-\frac{1}{\alpha_w}d_wx_w^T\|_F^2
\\&= \left (A-A^{(t-1)}-\frac{1}{\alpha_w}d_wx_w^T \right )^T \left (A-A^{(t-1)}-\frac{1}{\alpha_w}d_wx_w^T \right )
\\&= \|A-A^{(t-1)}\|_F^2 - \frac{2}{\alpha_w}x_w^Tx_w + \frac{2}{\alpha_w}\tilde x_w^Tx_w + \frac{\tilde x_{ww} - x_{ww}}{\alpha_w}x_w^Tx_w
\end{align*}

For problem \ref{def:ercss}, this equality becomes
$$
\|A-A^{(t)}\|_F^2 =\|A-A^{(t-1)}\|_F^2 - \frac{2}{\alpha_w}x_w^Tx_w^* + \frac{2}{\alpha_w}\tilde x_w^Tx_w^* + \frac{\tilde x_{ww} - x_{ww}}{\alpha_w}\left ( x_w^* \right )^Tx_w^*
$$
where $x^*$ is defined as in section \ref{sec:approximating}.

This means that we can compute the loss at each iteration as a function of its previous value and some readily available variables. The loss before the first iteration is simply the
squared norm of the data matrix, which can be computed as $tr A^TA=-tr X$.

In section \ref{sec:bound} we discuss how this quantity can be used to determine when to stop iterating.

\subsection{Lower bound for the error}
\label{sec:bound}
In the conventional formulation of the CSSP, the approximation error eventually reaches zero as we add columns to the basis subset. However, the introduction of the regularizing term bars the
approximation from being perfect. For this reason, even if we can track the loss as detailed in section \ref{sec:choosing}, it can be difficult to determine how much of an improvement can still be
made. In other words, in problem \ref{def:cssp} we can evaluate the expressive power of our column subset by checking how far the error is from zero. On the contrary, in the case of problem
\ref{def:rcss} we do not know what this ideal lower bound is.

In the case of problem \ref{def:ercss}, the loss can of course reach zero, but rather artificially (when $k=n$, we simply evaluate the error over zero columns of the matrix). Here we propose a lower
bound for the objective function of problem \ref{def:ercss} at each iteration, providing insight on the manner in which $\lambda$ interacts with the approximation error. This bound can be easily
extended to problem  
\ref{def:rcss}. 

%Consider problem  \ref{def:ercss}. We only measure the error incurred in approximating $n-k$ not in $S$. We now provide a lower bound for this error.

%If we consider $\lambda =0$, which results in problem \ref{def:cssp}, then any subset of $n-k$ columns can be approximated perfectly by itself, resulting in zero
%error. After the introduction of a regularization term $\lambda \neq 0$, however, this no longer holds. This happens because even the best possible approximation is imperfect. Here we analyze how
%$\lambda$ affects this error. 

\begin{lemma}  
Given a matrix $A \in \mathbb{R}^{m\times n}$, a regularization term
$\lambda \in \mathbb R$ and a set $S$ such that $|S|=k\leq n$. let $\sigma_i$ denote the $i$-th largest singular value of $A$. Then
\begin{equation}
\|A_{\bar S}-A_S(A_S^TA_S+\lambda I)^{-1}A_S^TA_{\bar S})\|_F^2 \geq \lambda^2\sum_{i=k+1}^n \left ( \frac{\sigma_i}{\sigma_i^2+\lambda} \right )^2
\end{equation}  
\end{lemma}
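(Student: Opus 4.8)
The plan is to rewrite the residual so that the regularization parameter is isolated, and then reduce the bound to a standard eigenvalue inequality for symmetric positive semidefinite matrices. Throughout I take $\lambda>0$, as is needed for $(A_S^TA_S+\lambda I)^{-1}$ to exist and for the regularized problem to be well posed.

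The first step uses the push-through (resolvent) identity
$$I - A_S(A_S^TA_S+\lambda I)^{-1}A_S^T = \lambda\,(A_SA_S^T+\lambda I)^{-1}.$$
Since $A_{\bar S}-A_S(A_S^TA_S+\lambda I)^{-1}A_S^TA_{\bar S} = \big(I - A_S(A_S^TA_S+\lambda I)^{-1}A_S^T\big)A_{\bar S}$, this gives
$$\big\|A_{\bar S}-A_S(A_S^TA_S+\lambda I)^{-1}A_S^TA_{\bar S}\big\|_F^2 = \lambda^2\,\mathrm{tr}\!\big((A_SA_S^T+\lambda I)^{-2}A_{\bar S}A_{\bar S}^T\big).$$
Writing $Q = A_SA_S^T$ and $N = AA^T$, and using $A_{\bar S}A_{\bar S}^T = N - Q$ (the columns of $A$ partition into those indexed by $S$ and by $\bar S$), the claim reduces to
$$\mathrm{tr}\!\big((Q+\lambda I)^{-2}(N-Q)\big) \;\ge\; \sum_{i=k+1}^n \frac{\sigma_i^2}{(\sigma_i^2+\lambda)^2},$$
where the $\sigma_i^2$ are exactly the eigenvalues of $N$ and $\sigma_i = 0$ once $i > \mathrm{rank}(A)$.

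For the reduced inequality I would argue as follows. Let $\Pi$ be the orthogonal projector onto $\ker Q$; because $Q = A_SA_S^T$ has rank at most $k = |S|$, we have $\mathrm{rank}(\Pi) \ge m - k$. A spectral decomposition of $Q$ shows $(Q+\lambda I)^{-2} \succeq \lambda^{-2}\Pi$, and since $N - Q \succeq 0$ and the trace of a product of positive semidefinite matrices is nonnegative, $\mathrm{tr}\!\big((Q+\lambda I)^{-2}(N-Q)\big) \ge \lambda^{-2}\,\mathrm{tr}\big(\Pi(N-Q)\Pi\big) = \lambda^{-2}\,\mathrm{tr}(\Pi N \Pi)$, the last equality because $Q\Pi = \Pi Q = 0$. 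By the Ky Fan minimum principle, $\mathrm{tr}(\Pi N \Pi)$ is at least the sum of the $\mathrm{rank}(\Pi)$ smallest eigenvalues of $N$, hence at least $\sum_{i=k+1}^m \sigma_i^2$. Multiplying back by the leading $\lambda^2$ cancels the $\lambda^{-2}$, leaving $\sum_{i=k+1}^m \sigma_i^2$; the term-by-term bound $\sigma_i^2 \ge \lambda^2\sigma_i^2/(\sigma_i^2+\lambda)^2$ together with padding by zero terms (which turns the upper limit $m$ into $n$, since $\sigma_i = 0$ for $i > \mathrm{rank}(A)$) then recovers exactly the claimed expression.

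The creative step is the push-through rewriting: it makes the dependence on $\lambda$ explicit and lets the projection-onto-$\ker Q$ estimate do the work, while never requiring one to square an operator inequality (which would fail). Both remaining ingredients — this identity and the Ky Fan minimum principle (equivalently von Neumann's trace inequality in anti-sorted form) — are standard. I expect the only real care to be dimensional bookkeeping: making the argument robust to $n > m$, to $k > m$, and to a rank-deficient $A_S$, all of which are absorbed by the facts that $\sigma_i = 0$ for $i > \mathrm{rank}(A)$ and that the Ky Fan bound only improves when $\mathrm{rank}(\Pi)$ exceeds $m-k$.
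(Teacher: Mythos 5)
Your proof is correct, but it takes a genuinely different route from the paper's. The paper first computes, via the SVD, the exact regularized self-approximation error $\|A-f_A([n],\lambda)\|_F^2=\lambda^2\sum_{i=1}^n\left(\sigma_i/(\sigma_i^2+\lambda)\right)^2$, then argues that replacing $A_S$ by the full matrix $A$ in the approximation operator can only decrease the error, and finally invokes the interlacing inequalities for the singular values of the $m\times(n-k)$ column submatrix $A_{\bar S}-A(A^TA+\lambda I)^{-1}A^TA_{\bar S}$ to discard the first $k$ terms of the sum. You instead isolate $\lambda$ with the push-through identity, reduce the claim to a trace inequality in $Q=A_SA_S^T$ and $N=AA^T$, and settle it by bounding $(Q+\lambda I)^{-2}$ below by $\lambda^{-2}$ times the projector onto $\ker Q$ and applying the Ky Fan minimum principle; each step checks out, including the rank and padding bookkeeping. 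Two observations on what each route buys. First, your argument actually establishes the stronger, $\lambda$-free bound $\|A_{\bar S}-A_S(A_S^TA_S+\lambda I)^{-1}A_S^TA_{\bar S}\|_F^2\geq\sum_{i=k+1}^{\min(m,n)}\sigma_i^2$ (the classical Eckart--Young floor), from which the stated inequality follows only by the elementwise weakening $\lambda^2\sigma^2/(\sigma^2+\lambda)^2\leq\sigma^2$; this shows the lemma's bound is never tight for $\lambda>0$ unless the tail singular values vanish, which is relevant if it is to inform a stopping criterion. Second, your route is logically tighter: the paper's two comparison steps --- monotonicity of the regularized error when the column set is enlarged, and the interlacing step, which implicitly assumes the values $\lambda\sigma_i/(\sigma_i^2+\lambda)$ are sorted consistently with the $\sigma_i$ even though that map is not monotone --- are asserted rather than proved, whereas everything you invoke is a standard, directly checkable fact. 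The price is that the $\lambda$-dependent form of the bound emerges only at the very end, as a corollary of a $\lambda$-free estimate, rather than as the natural quantity the way the paper's SVD computation presents it.
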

\begin{proof}  
Let us assume $A=A_S$, which corresponds to the best possible approximation we can obtain of the columns of $A$ using a subset of its columns as a basis. If $A=U\Sigma V^T$ is the
singular value decomposition of $A$,
$$
A-A(A^TA+\lambda I)^{-1}A^TA  = U\Sigma V^T-U\Sigma (\Sigma^2+\lambda I)^{-1} \Sigma^2V^T
$$
$$   = U\Sigma V^T-U
\left (\begin{array}{ccc}
\frac{\sigma_1^3}{\sigma_1^2+\lambda} &  &
\\ & \ddots &
\\  &  & \frac{\sigma_n^3}{\sigma_n^2+\lambda}
\end{array} \right ) V^T
$$

$$   = U \left (\begin{array}{ccc}
\frac{\sigma_1\lambda}{\sigma_1^2+\lambda} &  &
\\ & \ddots &
\\  &  & \frac{\sigma_n\lambda}{\sigma_n^2+\lambda}
\end{array} \right ) V^T
$$
The error incurred by approximating $A$ by itself using the regularized formulation is therefore exactly
$$
\|A-f_A([n], \lambda)\|_F^2 = \lambda^2\sum_{i=1}^n \left ( \frac{\sigma_i}{\sigma_i^2+\lambda} \right )^2
$$
Now consider that
\begin  {align*}
  \|A_{\bar S}-A_S(A_S^TA_S+\lambda I)^{-1}A_S^TA_{\bar S}\|_F^2 &\geq \|A_{\bar S}-A(A^TA+\lambda I)^{-1}A^TA_{\bar S}\|_F^2
  \\ & \geq \lambda^2\sum_{i=k+1}^n \left ( \frac{\sigma_i}{\sigma_i^2+\lambda} \right )^2
\end{align*}
The second inequality holds because of the interlacing inequalities of the singular values \cite{thompson1972principal}, and because $A_{\bar S}-A(A^TA+\lambda I)^{-1}A^TA_{\bar S}$ is an $m \times (n-k)$ submatrix of
$A-f_A([n], \lambda)$.
\end{proof}
To adapt this bound to problem \ref{def:rcss}, we simply need to extend the summation of the last inequality over all singular values.
The resulting expression for the error bound vanishes when $\lambda=0$ and approaches $\sum_i\sigma^2$ as $\lambda \rightarrow \infty$. This is of course consistent with the problem formulation. Observe that in the
first case, we are measuring the error incurred by approximating $A_S$ using its full span. In the second case, we approximate $A$ with a vanishing matrix, thus making the error equal to
$\|A\|_F^2=\sum_i\sigma_i^2$. 

This bound can be used to choose the value of $k$ if the input data set is suitable. If at some point the algorithm attains a value of the objective in problem \ref{def:ercss} that is close to this bound, then the present matrix has
almost as much representative power as the full column set, thus making the addition of more columns innecessary.

\section{Numerical experiments}
In order to validate our claims, we perform a series of numerical experiments. Specifically, we aim to assess the following aspects:
\begin{itemize}
\item \textit{Generalization ability}. We test the ability of the proposed algorithm to choose variables that can approximate well not only the input data (training data) but also future observations
  (test data). 
\item \textit {Stability}. We test how robust our algorithm is to noisy variations in the data.
\item \textit{Conditioning}. We measure the conditioning of the selected submatrix, defined as the ratio between the largest and the smallest singular values.
\item \textit{Running time}. We evaluate the running time of our algorithm with respect to different input parameters.
\item \textit{Applications}. We evaluate the effectiveness of our algorithm as a preprocessing step for clustering and its ability to reconstruct partially observed images.
\end{itemize}
To this end, we employed a variety of well-known data sets. We now briefly describe them, and indicate the preprocessing operations and the training/test splits for each of them.

\begin{itemize}
\item Isolet \cite{fanty1991spoken}. This data set consists of a collection of spoken letter recordings by various individuals, each represented by a set of features. The data were used as distributed. The variables are real-valued between -1 and 1. For the test set, we respected the split proposed by the authors.
\item MNIST \cite{lecun2010mnist}. Images of handwritten digits. The data were divided by 255 to ensure that all values be between 0 and 1. The training/test split provided by the authors was respected.
\item Yale Face Extended \cite{georghiades2001few}. Images of faces. The data were divided by 255 to ensure that all values be between 0 and 1. The first 1200 instances were used for training. The rest for testing.
\item ORL \cite{samaria1994parameterisation}. Ten different images of each of 40 distinct subjects. The data were divided by 255 to ensure that all values be between 0 and 1. The first 300 instances were used for training. The rest for testing.
%\item COIL-100 \cite{nayar1996columbia}. Images of objects of
%different categories. The data were divided by 255 to ensure that all
%values be between 0 and 1. The first half of the instances were used
%for training. The rest for testing.
\item COIL-20 \cite{nene1996columbia}. Images of objects of different
  categories. The data were divided by 255 to ensure that all values
  be between 0 and 1. The data were split into two halves for training
  and testing, ensuring class balance between both sets. We reduce the
  size of the images to 64$\times$64.
\item Online News Popularity \cite{fernandes2015proactive}. Statistics associated to news articles. All variables were standardized to zero mean and unit variance. The first 30,000 instances were used for training. The rest for testing.
%\item YearPredictionMSD \cite{Bertin-Mahieux2011}. Variables describing songs of different years. All variables were standardized to zero mean and unit variance. We respected the training/test split proposed by the authors.
\end{itemize}  
Table \ref{tab:datasets} summarizes the employed data sets.

\begin{table}[!hbt]
  \centering
  \caption{Employed datasets.}
\begin{tabular}{|c||c|c|c|}
\hline
%\multicolumn{3}{|c|}{Top-10 appearing features} \\
Dataset & Variables & Train & Test \\
\hline
ORL & 1024 & 300 & 100
\\ \hline
MNIST & 784 & 60,000 & 10,000
\\ \hline
YaleB & 1024 & 1200 & 1214
\\ \hline
OnlineNews & 58 & 30,000 & 30,000
\\ \hline
IsoLet & 617 & 5200 & 1559
\\ \hline
COIL-20 & 4096 & 300 & 100
%\\ \hline
%YearPredictionMSD & 90 & 463,715 & 51,630
\\ \hline
\end{tabular}
\label{tab:datasets}
\end{table}

We consider two algorithms:
\begin{itemize}
\item GCSS: The unregularized greedy algorithm for problem
  \ref{def:cssp}. We use a Python implementation of the algorithm described by Farahat et al. \cite{farahat2011efficient}.
\item RGCSS: The algorithm proposed in this paper to solve problem
  \ref{def:ercss} (iterating until the desired number of columns is
  chosen). We use a Python implementation. In all experiments we use
  the algorithm adapted to optimize problem \ref{def:ercss}.
\end{itemize}

\subsection{Generalization ability}
\label{sec:exp_generalization}
In our first set of experiments, we evaluate whether the regularized formulation and the corresponding algorithm select columns that produce models with better generalization ability. To this end,
we run the algorithms, both GCSS and RGCSS, on small samples of the training splits of the data sets, and then measure the ability of the resulting models to approximate the rest of
the features of the test split.

Given an input matrix $A$, a test matrix $B$, a number $k \in \mathbb N$ and a value of $\lambda \in \mathbb R$, assume the algorithm being run outputs the set $S$. Then we measure the loss as
$$
L(S) = \|B-B_S(A_S^TA_S +\lambda I)^{-1}A_S^TA\|_F^2
$$
Notice that for both the unregularized and the regularized algorithms, we measure the loss using a regularized approximation. This is because even though the unregularized version does not attempt
to optimize this objective, the approximation on the test split will generally be much better if we introduce the regularization term $\lambda$, thus providing a fairer comparison. 
In doing so, we set the bar higher for our algorithm.

For all data sets, we run the algorithms on $2^i\%, i=0,\dots,4$ of the training data, for $k=2^i, i=4, \dots, 9$, and then measure the loss on the test split. It should be noted that we
simply set $\lambda=1$ for all cases. However, better results might be obtained by fine-tuning this parameter.
In order to assess the improvement brought by the regularized variant, we measure the relative improvement as follows. Let $S_N$ be the set output by the GCSS, and $S_R$ the
set output by RGCSS. 
Then the relative improvement is defined as
$$
100\times \frac{L(S_N)-L(S_R)}{L(S_N)}
$$
In order to provide a better estimate of this value, we run the algorithms 50 times on different random samples of the training set and replace $L(S_N)$ and $L(S_R)$ with their averages.

Figure \ref{fig:loss} illustrate the results. For each data set, we show the relative improvement for the different fractions of
the training set. It can be seen that the regularizing penalty yields significant improvements, especially when training data are scarce and $k$ is large. Notice the different scale on the plots
corresponding to OnlineNews and MNIST, where the improvement was more moderate.

\begin{figure}[hbt!]
    \subfloat[]{%
  \begin{minipage}{\linewidth}
  \includegraphics[width=.5\linewidth]{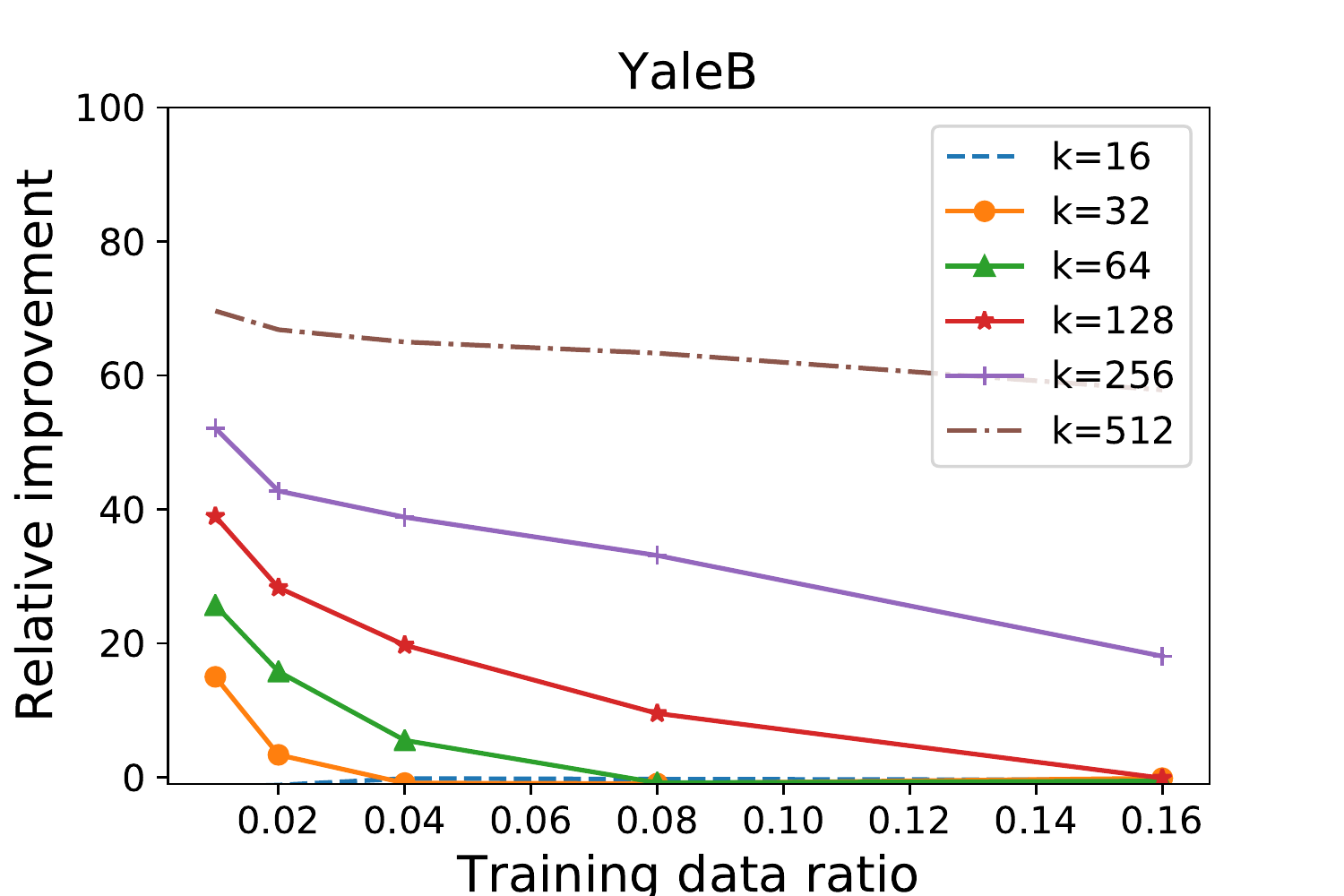}\hfill
  \includegraphics[width=.5\linewidth]{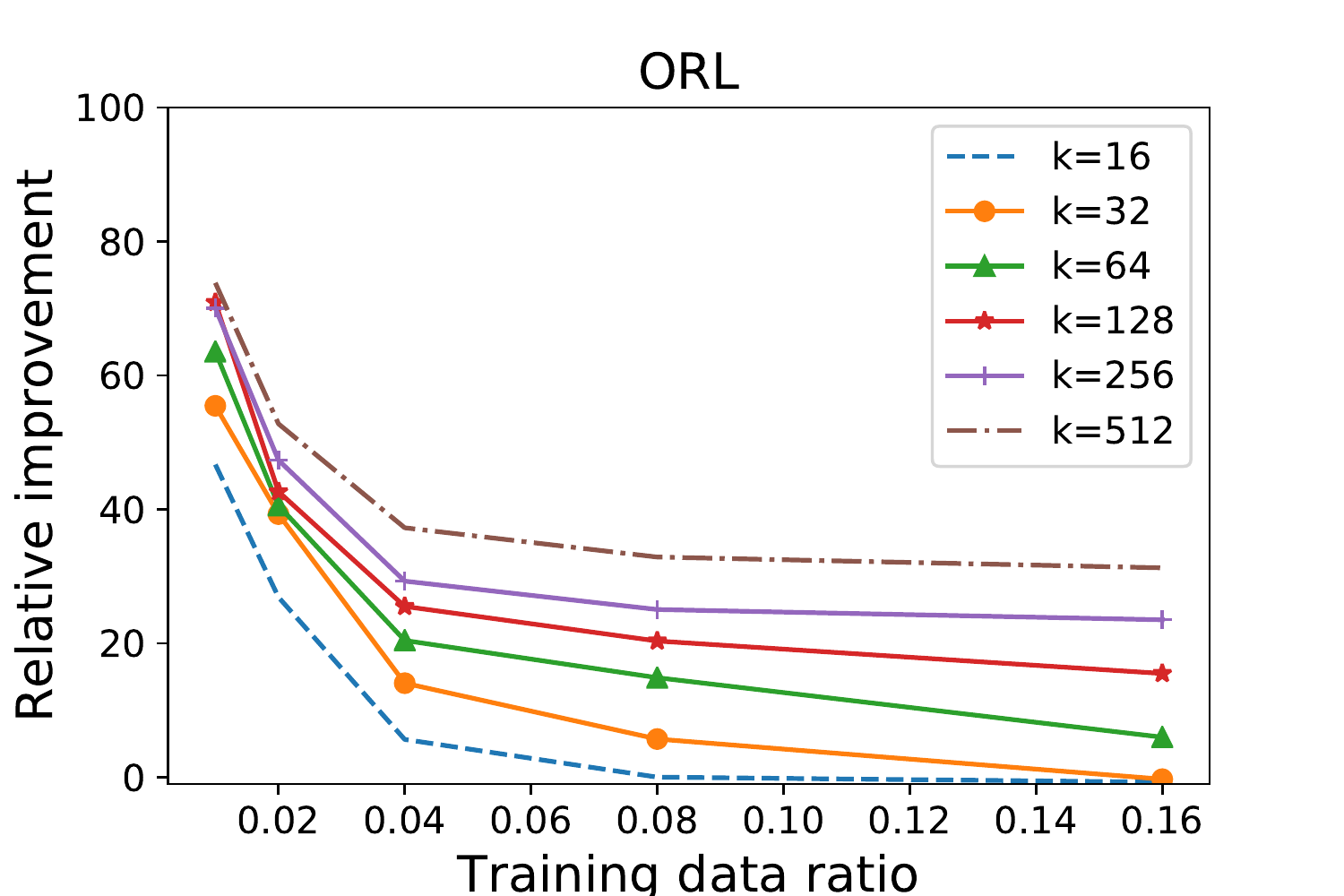}\hfill
  \end{minipage}}\par
    \subfloat[]{%
      \begin{minipage}{\linewidth}
  \includegraphics[width=.5\linewidth]{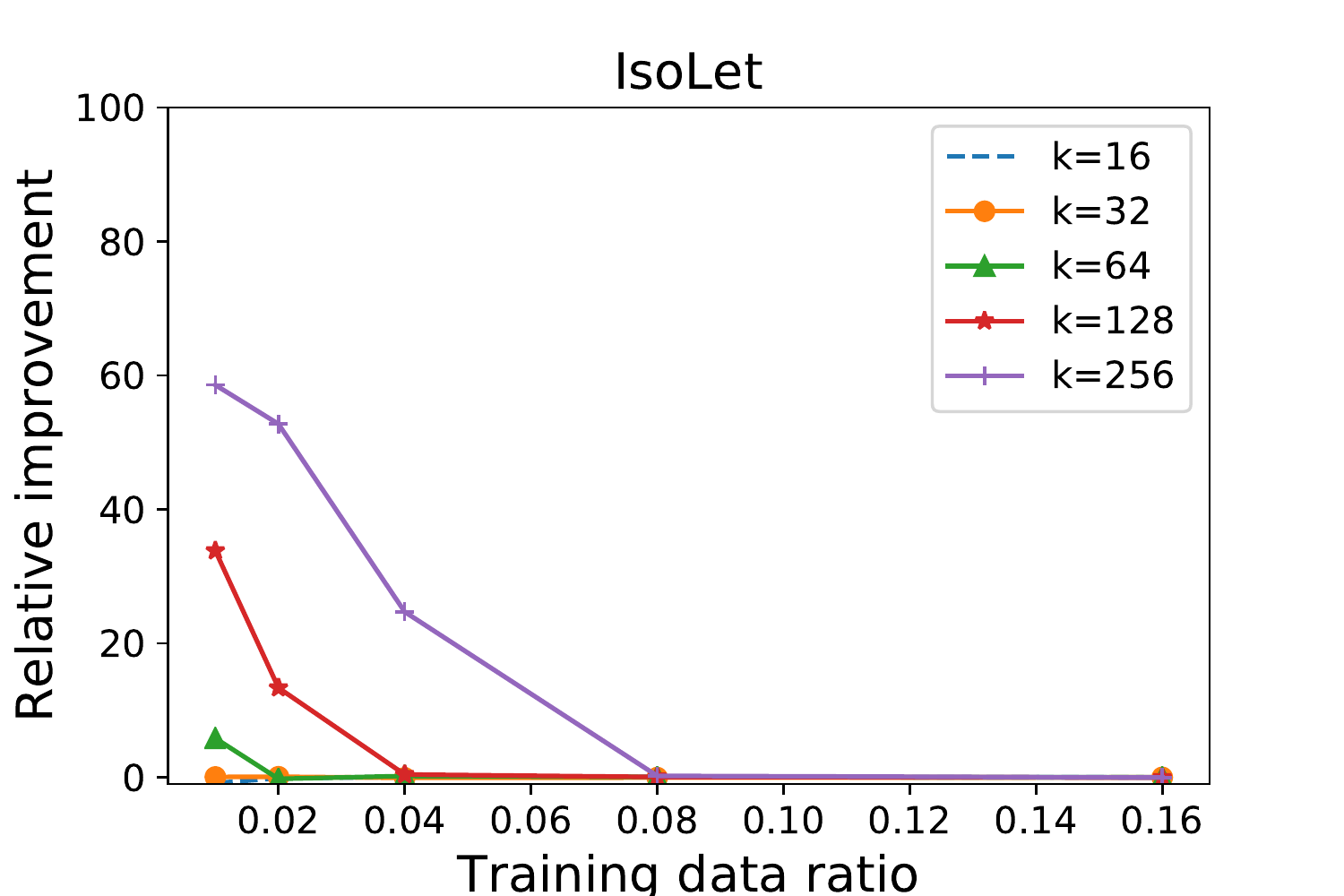}\hfill
  \includegraphics[width=.5\linewidth]{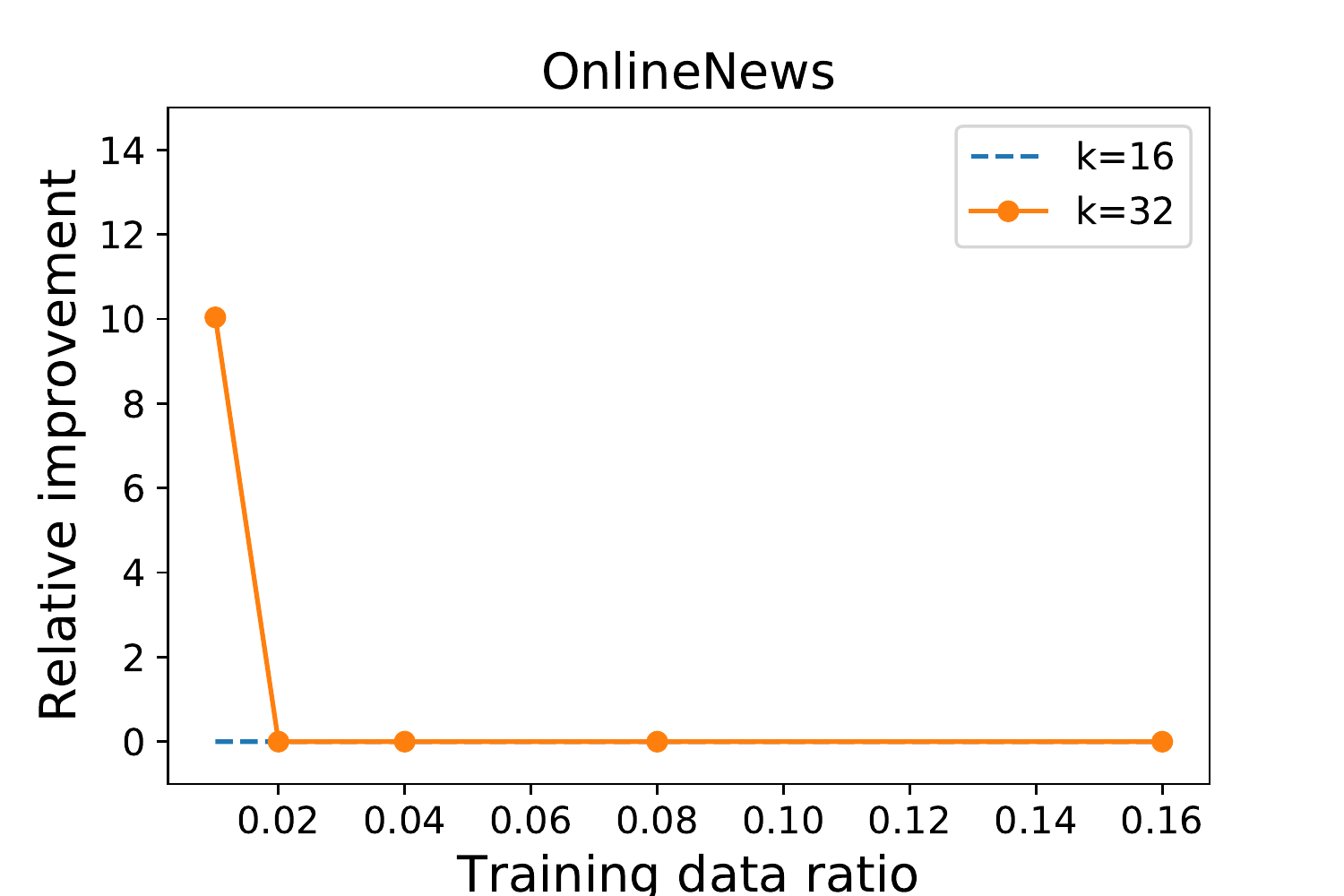}\hfill
    \end{minipage}}\par
    \subfloat[]{%    
      \begin{minipage}{\linewidth}
        \centering
  \includegraphics[width=.5\linewidth]{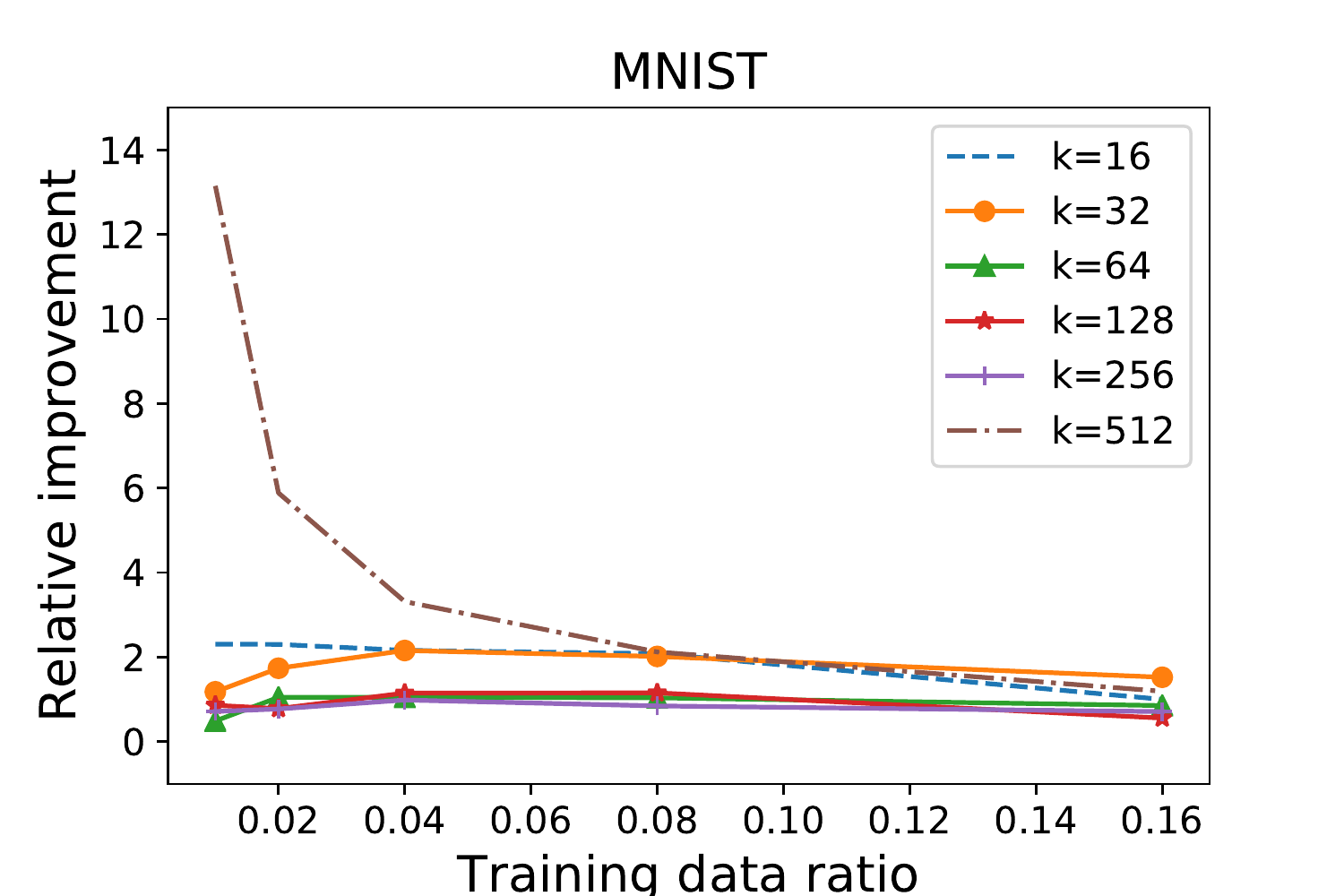}\hfill
  \end{minipage}}
    \caption{Relative improvement of the regularized variant}
    \label{fig:loss}
\end{figure}

\subsection{Stability}
\label{sec:exp_stability}
As discussed above, instances of the unregularized formulation of the column subset selection problem (problem \ref{def:cssp}) where $n \gg m$ are inconvenient.
In this set of experiments, we aim to verify whether the regularized formulation and the corresponding algorithm improves the stability of the results. In order to tests this, we perturb the
input data to see  how the algorithms behave in the face of noise.    

To measure the stability of each of the algorithms, we run them on $s$ different instances of the perturbed matrix and measure the average pairwise Jaccard index, which we define below.
Given two sets $S_1, S_2$, the Jaccard index is measured as
$$
J(S_1,S_2)=\frac{|S_1 \cap S_2|}{|S_1 \cup S_2|}
$$
Given a collection of sets $S_1, \dots, S_s$, we define the average pairwise Jaccard index as
$$
 \bar J(S_1, \dots, S_s)=\frac{1}{s(s-1)} \sum_i^s\sum_j^s J(S_i,S_j) \mathbb I\{i \neq j\}
$$
 where $I\{i \neq j\}$ is 1 if $i \neq j$, 0 otherwise. 
To make this index more meaningful, we calculate its expected value assuming the column subsets are chosen uniformly at random. Given a matrix of
 $n$ columns, assume we want to select a subset of size $k$. If we pick two subsets of $[n]$ at random, $S_1$ and $S_2$, there are ${n \choose k} ^2$ possible outcomes. Out of these, the number of pairs that have $k-p$ elements
 in common is
 $$
 {n \choose k} {k \choose p} {{n-k} \choose p}
 $$
 To see this, observe that for each of the ${n \choose k}$ possible values of $S_1$, $S_2$ must have $p$ out of $k$ elements that are not in $S_1$, and those can be any of the
 remaining $n-k$ ones.
 Therefore, the expected value of the size of the intersection between two subsets drawn uniformly at random is
 $$
\mathbb E[S_1\cap S_2] = \sum_{p=0}^k \frac{ (k-p) {k \choose p} { {n-k} \choose p} }{{n \choose k}}
 $$
Now, the Jaccard index of each of those pairs is the size of the intersection divided by the size of the union. Hence, given $n$ and $k$,
$$
\mathbb E[J]=\sum_{p=0}^k \frac{ (k-p) {k \choose p} { {n-k} \choose p} }{{n \choose k}(k+p)}
$$
We can now measure the stability of the algorithms by running them on different perturbations of the input matrix, and then comparing the average pairwise Jaccard index of the resulting subsets with
the expected value of the Jaccard index.
We consider the case where $n>m$, that is, the input matrix has more columns than rows. To this end, we take random samples of 100 rows of each training data set and set $k=m$. In the case of OnlineNews,
since $n<100$, we take $k=m=n/2$. Note that the case $k>m$ becomes pathological in the unregularized formulation, since any column choice once the span of the data has been covered is equally
inocuous. Therefore, as $k$ grows beyond the value of $m$, the Jaccard index for the unregularized formulation will approach $\mathbb E[J]$ if ties are broken arbitrarily.

We take the input data set and perturb it with a matrix whose entries are independently sampled from a Gaussian distribution with zero mean and a standard deviation of $10^{-3}$. As explained above, we
apply $s$ different perturbations to the input data and run the algorithms on each of them, thus obtaining $s$ different subsets for each algorithm. We set $s=100$ and measure the average pairwise Jaccard index.
Table \ref{tab:stability} shows the results. We also show the expected value of the Jaccard index to know how close to a random choice each algorithm is. We run this experiment for
$\lambda=0$ (i.e. the unregularized algorithm by Farahat et al. \cite{farahat2011efficient}), $\lambda=1$ and $\lambda=10$. It can be seen that the regularized formulation significantly improves the stability of the results.

\subsection{Conditioning}
The conditioning of a matrix can be loosely understood as a measure of numerical rank defficiency. Formally, given a matrix $C$ of rank $k$, 
we define its conditioning, or its condition number, as
$\kappa(C)=\frac{\sigma_1(C)}{\sigma_k(C)}$. Ill-conditioned matrices,
that is, with a large condition number, are prone to significant numerical errors when involved in the solution of linear systems.

We compute the condition number of the submatrices of the training set selected by both
algorithms, regularized and unregularized. We run the algorithm 50 times on
different random samplings of the training set and report the minimum,
average and maximum across all runs in table \ref{tab:conditioning}.
The results clearly reveal that the regularization term encourages
the selection of significantly better-conditioned column subsets.

In order to avoid overcrowding the table we only report the results
for $k=16,32$, as they are illustrative of the general behaviour of
the algorithms in this regard.

An interesting fact revealed by our experiments is the following: in
cases where $k>m$, the unregularized formulation of the problem is
ill-posed. In terms of the objective function, once the
$m$-dimensional subspace spanned by the matrix has been covered, any
subsequent column choice is equally good. The unregularized algorithm
therefore yields particularly poorly conditioned subsets (see e.g. YaleB,
$0.01*m$, $k=16$, $\lambda=0$ in table \ref{tab:conditioning}. What is
surprising is that in these situations, the regularized variant
produces column subsets that lead to well-conditioned matrices
\textit{even in the test set}. An example of the obtained condition
numbers on submatrices of the test set is shown in table
\ref{tab:test_cond}.

\begin{table}[!hbt]
\footnotesize
  \centering
  \caption{Condition number of the matrices output by the unregularized and the regularized algorithms. For each experiment we report the minimum, average and maximum (min / avg / max) of 50 runs on different random samplings.}
\begin{tabular}{|c|c|| c|c| c|c|}
\hline
& & \multicolumn{2}{|c|}{$k=16$} & \multicolumn{2}{|c|}{$k=32$}\\
\hline
Sample size & Dataset & $GCSS$ & $RGCSS$ & $GCSS$ & $RGCSS$  \\
\hline 
%COIL-100 & 0.056 & 0.19 & 0.253 & 0.003
%\\ \hline
\multirow{ 5}{*}{0.01*m} 
& ORL & 13.07 / 338.94 / 3127.57 & 4.45 / 6.28 / 10.71 & 33.92 / 116.77 / 240.82 & 3.90 / 5.10 / 7.68
\\ \cline{2-6}
& MNIST & 130.88 / 142.34 / 153.50 & 5.74 / 6.13 / 6.56 & 10.22 / 146.46 / 241.65 & 10.22 / 10.90 / 11.80
\\ \cline{2-6}
& YaleB & 243.81 / 12186.18 / 48241.03 & 31.97 / 183.84 / 1019.68 & 214.28 / 1218.04 / 2575.45 & 10.10 / 63.30 / 272.59
\\ \cline{2-6}
& OnlineNews & 2.37 / 3.63 / 5.33 & 2.37 / 3.39 / 4.59 & 4.67 / 5.86 / 7.43 & 4.58 / 5.71 / 7.32
\\ \cline{2-6}
& Isolet & 12.66 / 19.91 / 29.78 & 11.58 / 14.75 / 17.96 & 43.04 / 77.20 / 133.39 & 31.35 / 39.31 / 58.30
\\ \hline \hline
\multirow{ 5}{*}{0.04*m} 
& ORL & 119.35 / 2331.63 / 20602.93 & 55.78 / 90.22 / 187.40 & 183.43 / 985.98 / 4140.42 & 20.59 / 27.88 / 42.15
\\ \cline{2-6}
& MNIST & 136.33 / 147.26 / 161.20 & 5.61 / 5.90 / 6.27 & 227.12 / 235.81 / 247.82 & 9.36 / 10.03 / 10.40
\\ \cline{2-6}
& YaleB & 29.62 / 35.36 / 46.36 & 24.32 / 32.24 / 41.23 & 99.93 / 134.32 / 185.35 & 76.28 / 103.24 / 139.18
\\ \cline{2-6}
& OnlineNews & 2.46 / 3.17 / 4.07 & 2.46 / 3.01 / 3.91 & 3.99 / 4.82 / 5.86  & 3.98 / 4.72 / 5.51
\\ \cline{2-6}
& Isolet & 10.82 / 13.24 / 16.66 & 10.42 / 12.52 / 16.53 & 20.40 / 24.28 / 28.62  & 20.38 / 23.58 / 27.96
\\ \hline \hline
\multirow{ 5}{*}{0.16*m} 
& ORL & 61.95 / 74.08 / 90.82 & 41.89 / 60.14 / 86.10 & 178.82 / 233.01 / 304.82 & 131.69 / 175.00 / 210.30
\\ \cline{2-6}
& MNIST & 130.80 / 146.03 / 151.36 & 5.55 / 18.69 / 134.06 & 225.88 / 235.61 / 244.84 & 9.95 / 55.10 / 241.62
\\ \cline{2-6}
& YaleB & 22.11 / 24.50 / 28.54 & 20.93 / 24.29 / 28.38 & 40.26 / 43.60 / 51.41 & 38.01 / 41.91 / 48.99
\\ \cline{2-6}
& OnlineNews & 2.45 / 2.80 / 3.34 & 2.45 / 2.80 / 3.34 & 3.78 / 4.38 / 5.12  & 3.77 / 4.24 / 5.12
\\ \cline{2-6}
& Isolet & 9.58 / 10.74 / 11.93 & 9.58 / 10.74 / 11.93 & 17.82 / 20.30 / 21.42  & 17.82 / 20.30 / 21.42
\\ \hline
\end{tabular}
\label{tab:conditioning}
\end{table}

\begin{table}[!hbt]
\footnotesize
  \centering
  \caption{Condition number of the test submatrices chosen by the
    unregularized and the regularized algorithms, in a case where
    $k>m$. In this situation, the unregularized version of the problem
    becomes ill-posed, and the algorithm unstable.}
\begin{tabular}{|c|c|| c|c| c|c|}
\hline
& & \multicolumn{2}{|c|}{$k=16$} & \multicolumn{2}{|c|}{$k=32$}\\
\hline
Sample size & Dataset & $\lambda=0$ & $\lambda=1$ & $\lambda=0$ & $\lambda=1$  \\
\hline 
%COIL-100 & 0.056 & 0.19 & 0.253 & 0.003
%\\ \hline
0.01*m & YaleB & 6.22e+16 / 8.76e+16 / 2.37e+17 & 17.43 / 26.07 / 45.11 & 1.21e+16 / 1.74e+16 / 2.84e+16 & 28.34 / 46.74 / 72.87
\\ \hline 
\end{tabular}
\label{tab:test_cond}
\end{table}

\subsection{Clustering}
We test the effectiveness of our algorithm as a preprocessing step for clustering. Dimensionality reduction is often essential for these tasks,
because the distance computations employed by most clustering algorithms are particularly sensitive to large numbers of variables. 

In order to evaluate the ability of our methods to produce robust feature subsets, we proceed as follows: we run the algorithm on a small
portion of the training set (of varying size) and then reduce the test set so as to keep the chosen variables only. We then run the $k$-means clustering 
algorithm on this reduced data set. For reference, we also consider
the case where $k=n$, that is, using the whole feature set for
clustering. Note that in this case, the training split does not play a
part in the result. The results for the different test set sizes,
equal to $m-$(training set size), are thus expected to be similar.

We considered the data sets ORL, COIL20 and IsoLet. We discarded MNIST and YaleB, where the $k$-means algorithm did not produce acceptable
results, and OnlineNews, whose target values are better suited to a regression task.

To measure the quality of the result, we compute the normalized mutual information (NMI) of the ground truth labels and the obtained 
partition. The NMI is defined as follows. Given two discrete random variables $X,Y$, the NMI is defined as follows:
$$
NMI(X,Y) = \frac{MI(X,Y)}{\sqrt{H(X)H(Y)}}
$$
where $MI(X,Y)$ is the mutual information of $X$ and $Y$, and $H(X)$ is the entropy of $X$.

Cluster centroids were initialized using the $k$-means++ scheme, and the best result out of 10 in terms of
the objective function was kept. The process was repeated 50 times, running the column subset selection algorithms on different random
samples of the training set each time. We report the average of the obtained NMI values.

The results are shown in table \ref{tab:clust}, using from 1\% to 16\% of the training data. In all 3 data sets, RGCSS shows superior 
performance. Some of the results warrant further discussion. First, an interesting property of RGCSS is that the NMI is remarkably
stable with respect to the amount of training data used, while GCSS
generally only starts obtaining good results when a sizeable portion is 
employed. Second, in a few instances, RGCSS did show clearly poorer performance (COIL20, $k=16,32$). It would be interesting to determine
the cause of this defficiency.

This results provide evidence of the clear advantages of using the
regularized variant of column subset selection for practical
applications. In particular, note how the quality of the clustering
improves when using feature subsets of size $128$ or more rather than
the entire feature set. In the case
of RGCSS, this improvement is present even when the feature subset
was chosen using only 1\% of the training data.

\begin{table}[!hbt]
\footnotesize
  \centering
  \caption{NMI for clustering results on the test set using the feature subset chosen by each algorithm.}
\begin{tabular}{|c|| c|c| c|c| c|c| c|c| c|c|}
\hline
\multicolumn{11}{|c|}{ORL}\\
\hline
& \multicolumn{2}{|c|}{$0.01*m$} & \multicolumn{2}{|c|}{$0.02*m$} & \multicolumn{2}{|c|}{$0.04*m$} & \multicolumn{2}{|c|}{$0.08*m$} & \multicolumn{2}{|c|}{$0.16*m$}
\\ \hline
k & GCSS & RGCSS & GCSS & RGCSS & GCSS & RGCSS & GCSS & RGCSS & GCSS & RGCSS 
\\ \hline \hline
16 & 0.67 & 0.768 & 0.733 & 0.777 & 0.786 & 0.787 & 0.78 & 0.785 & 0.797 & 0.796
\\ \hline
32 & 0.653 & 0.791 & 0.713 & 0.798 & 0.76 & 0.806 & 0.809 & 0.809 & 0.817 & 0.805
\\ \hline
64 & 0.632 & 0.808 & 0.685 & 0.808 & 0.734 & 0.816 & 0.786 & 0.817 & 0.821 & 0.817
\\ \hline
128 & 0.614 & 0.819 & 0.657 & 0.817 & 0.7 & 0.815 & 0.754 & 0.818 & 0.8 & 0.819
\\ \hline
256 & 0.588 & 0.821 & 0.625 & 0.82 & 0.672 & 0.82 & 0.721 & 0.819 & 0.767 & 0.821
\\ \hline
512 & 0.565 & 0.824 & 0.596 & 0.823 & 0.639 & 0.825 & 0.69 & 0.821 & 0.739 & 0.821
\\ \hline
n & \multicolumn{2}{c|}{0.81} & \multicolumn{2}{c|}{0.808} & \multicolumn{2}{c|}{0.803} & \multicolumn{2}{c|}{0.811} & \multicolumn{2}{c|}{0.809}
\\ \hline
\end{tabular}

\begin{tabular}{|c|| c|c| c|c| c|c| c|c| c|c|}
\hline
\multicolumn{11}{|c|}{COIL20}\\
\hline
& \multicolumn{2}{|c|}{$0.01*m$} & \multicolumn{2}{|c|}{$0.02*m$} & \multicolumn{2}{|c|}{$0.04*m$} & \multicolumn{2}{|c|}{$0.08*m$} & \multicolumn{2}{|c|}{$0.16*m$}
\\ \hline
k & GCSS & RGCSS & GCSS & RGCSS & GCSS & RGCSS & GCSS & RGCSS & GCSS & RGCSS 
\\ \hline \hline
16 & 0.644 & 0.636 & 0.696 & 0.636 & 0.718 & 0.623 & 0.733 & 0.629 & 0.739 & 0.604
\\ \hline
32 & 0.642 & 0.686 & 0.703 & 0.67 & 0.744 & 0.684 & 0.762 & 0.676 & 0.769 & 0.672
\\ \hline
64 & 0.654 & 0.721 & 0.701 & 0.724 & 0.739 & 0.726 & 0.774 & 0.727 & 0.783 & 0.723
\\ \hline
128 & 0.639 & 0.76 & 0.701 & 0.769 & 0.744 & 0.78 & 0.774 & 0.772 & 0.789 & 0.769
\\ \hline
256 & 0.646 & 0.782 & 0.707 & 0.782 & 0.74 & 0.787 & 0.772 & 0.794 & 0.79 & 0.795
\\ \hline
512 & 0.662 & 0.782 & 0.698 & 0.779 & 0.746 & 0.777 & 0.77 & 0.779 & 0.793 & 0.783
\\ \hline
n & \multicolumn{2}{c|}{0.758} & \multicolumn{2}{c|}{0.755} & \multicolumn{2}{c|}{0.755} & \multicolumn{2}{c|}{0.754} & \multicolumn{2}{c|}{0.756}
\\ \hline
\end{tabular}

\begin{tabular}{|c|| c|c| c|c| c|c| c|c| c|c|}
\hline
\multicolumn{11}{|c|}{IsoLet}\\
\hline
& \multicolumn{2}{|c|}{$0.01*m$} & \multicolumn{2}{|c|}{$0.02*m$} & \multicolumn{2}{|c|}{$0.04*m$} & \multicolumn{2}{|c|}{$0.08*m$} & \multicolumn{2}{|c|}{$0.16*m$}
\\ \hline
k & GCSS & RGCSS & GCSS & RGCSS & GCSS & RGCSS & GCSS & RGCSS & GCSS & RGCSS 
\\ \hline \hline
16 & 0.573 & 0.571 & 0.576 & 0.575 & 0.579 & 0.577 & 0.583 & 0.583 & 0.589 & 0.589
\\ \hline
32 & 0.604 & 0.602 & 0.608 & 0.61 & 0.604 & 0.608 & 0.601 & 0.603 & 0.596 & 0.597
\\ \hline
64 & 0.642 & 0.655 & 0.66 & 0.665 & 0.656 & 0.662 & 0.666 & 0.665 & 0.662 & 0.663
\\ \hline
128 & 0.577 & 0.703 & 0.698 & 0.709 & 0.707 & 0.715 & 0.712 & 0.71 & 0.711 & 0.712
\\ \hline
256 & 0.5 & 0.725 & 0.613 & 0.733 & 0.728 & 0.735 & 0.734 & 0.733 & 0.733 & 0.736
\\ \hline
n & \multicolumn{2}{c|}{0.697} & \multicolumn{2}{c|}{0.699} & \multicolumn{2}{c|}{0.704} & \multicolumn{2}{c|}{0.704} & \multicolumn{2}{c|}{0.701}
\\ \hline
\end{tabular}
\label{tab:clust}
\end{table}

\subsection{Image reconstruction}
%\begin{figure}
%  \includegraphics[width=\linewidth]{faces89}\hfill
%    \caption{Reconstruction of test set images. Top: reconstruction by RGCSS. Bottom: reconstruction by GCSS. Right: Original image. We show reconstructions for $k=128,256,512,1024,2048$.}
%    \label{fig:faces1}
%\end{figure}

%\begin{figure}
%  \includegraphics[width=\linewidth]{faces65}\hfill
%    \caption{Reconstruction of test set images. Top: reconstruction by RGCSS. Bottom: reconstruction by GCSS. Right: Original image. We show reconstructions for $k=128,256,512,1024,2048$.}
%    \label{fig:faces1}
%\end{figure}
In order to provide a visual account of the improved generalization ability of the proposed method, we evaluate its ability to reconstruct
instances of unseen images. To this end, we run GCSS and RGCSS on a small portion of a 64$\times$64 version of the ORL data set. 

We proceed as follows. Let $A$ be the portion of the training data set used, and let $S$ be the subset output by the employed algorithm.
We define $C=A_S$ and  $W=(C^TC+\lambda I)^{-1}C^TA$. To rebuild an instance of the test set $x$, we compute 
$$
x_S^TW
$$

Three examples are shown in figure \ref{fig:recons}. The reconstructions
were done for $k=128,256,512,1024,2048$. The shown images were selected as
follows. We compute the average reconstruction error attained by each
algorithm at each value of $k$. The first image we show is the one
whose reconstruction error was the closest to the average for $k=2048$
(which was casually the same for both algorithms). The second image
was the closest to the average error using $GCSS$ for $k=128$. The
third, the closest to the average error using $RGCSS$ for $k=128$. The
chosen images should therefore be close to the expected reconstruction
error incurred by each algorithm for $k=128$ and $k=2048$.

These images demonstrate how the models obtained using RGCSS exhibit a better ability
to recover certain visual characteristics. In particular, this highlights a
previously discussed issue (see section \ref{sec:exp_stability}). When $k$ becomes larger than the rank of
the input matrix, the unregularized formulation of the problem no
longer constitutes a suitable objective to judiciously choose
additional columns. This is visible in this example when $k>512$.
Increasing the value of $k$ does not provide improvements when using
GCSS, while RGCSS manages to recover additional nuance when
the dimensionality of the model increases (even if the result is
visually subtle, the reconstruction error continues to decrease using
RGCSS, while it ceases to do so with GCSS).

\begin{figure}[hbt!]
    \subfloat[]{%
  \begin{minipage}{\linewidth}
  \includegraphics[width=\linewidth]{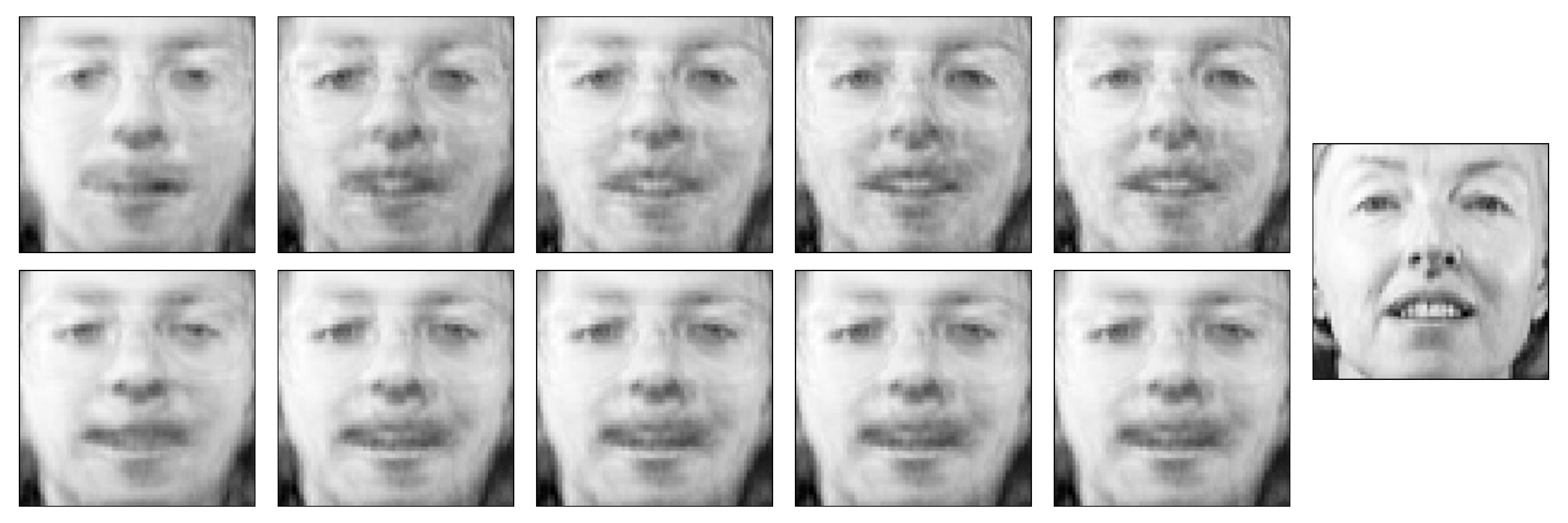}\hfill
  \end{minipage}}\par
    \subfloat[]{%
  \begin{minipage}{\linewidth}
  \includegraphics[width=\linewidth]{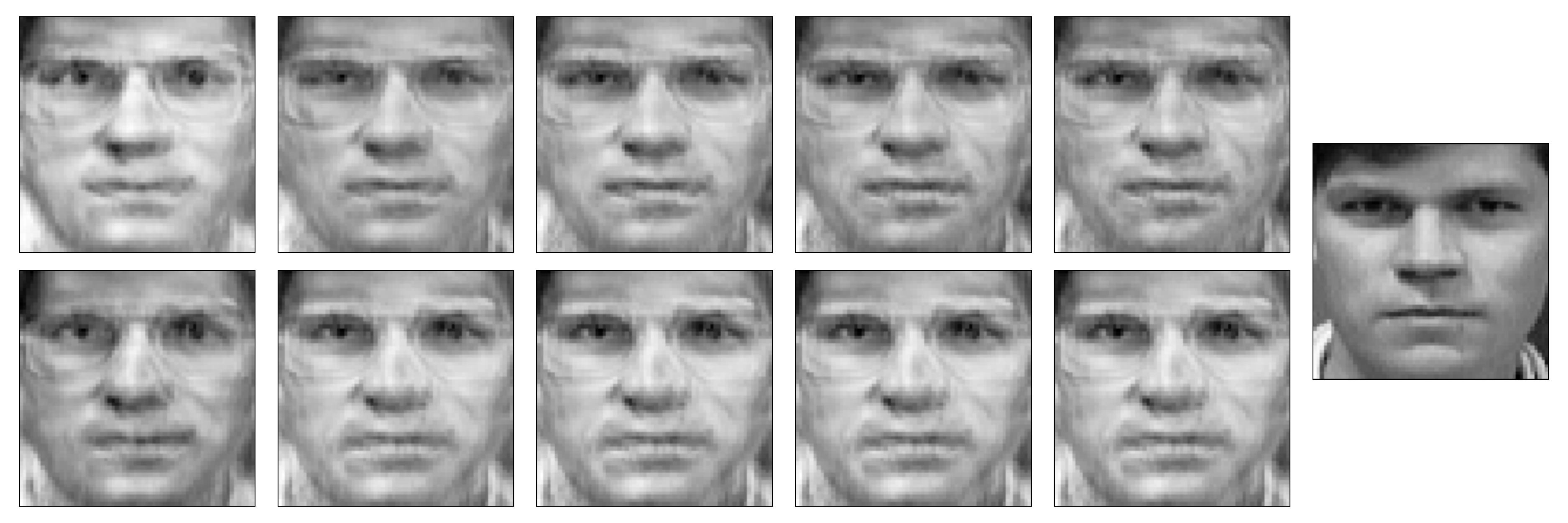}\hfill
  \end{minipage}}\par
    \subfloat[]{%
      \begin{minipage}{\linewidth}
  \includegraphics[width=\linewidth]{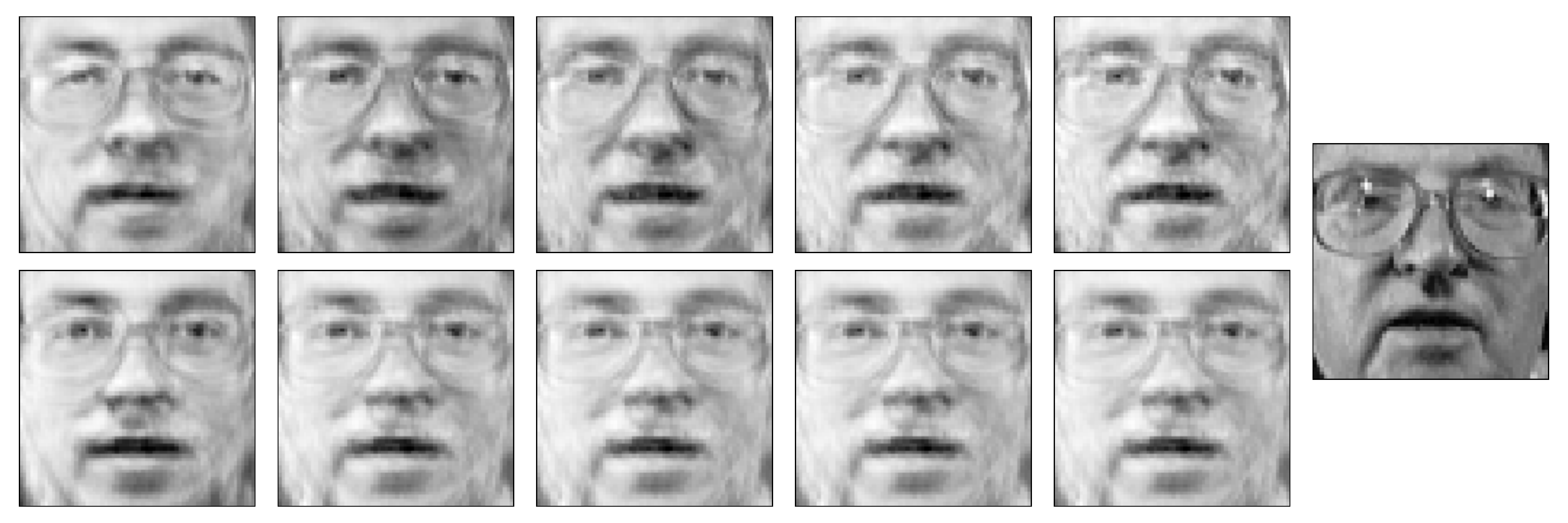}\hfill
    \end{minipage}}\par
\caption{Reconstruction of test set images. For both (a), (b) and (c): Top: reconstruction by RGCSS. Bottom: reconstruction by GCSS. Right: Original image. We show reconstructions for $k=128,256,512,1024,2048$.}
\label{fig:recons}
\end{figure}

\subsection{Running time}
In order to evaluate the running time of our algorithm, we generated synthetic matrices of increasing dimensions and ran the algorithms for increasing values of $k$. Figure \ref{fig:time} shows the
resulting times for the unregularized (GCSS) and the regularized (RGCSS) algorithms.
 To measure the sensitivity with respect to each parameter, we fixed the other two. Specifically, we run the following
experiments:
\begin{itemize}
\item $n=1000,k=128$, $m \in [10^3,10^5]$
\item $m=1000,k=1$, $n \in [100,5000]$
\item $m=1000,n=1024$, $k \in [10,1020]$
\end{itemize}    
The behavior of both algorithms with respect to $m$ and $n$ (the number of rows and columns respectively) is very similar. For large values of $k$, the regularized variant does exhibit noticeably larger
running times. It should be noted, however, that the ratio between the
time required by the two algorithms converges to a constant factor.
This ratio is shown in the plot for varying values of $k$ to support
this claim.

\begin{figure}
    \subfloat[]{%
  \begin{minipage}{\linewidth}
  \includegraphics[width=.5\linewidth]{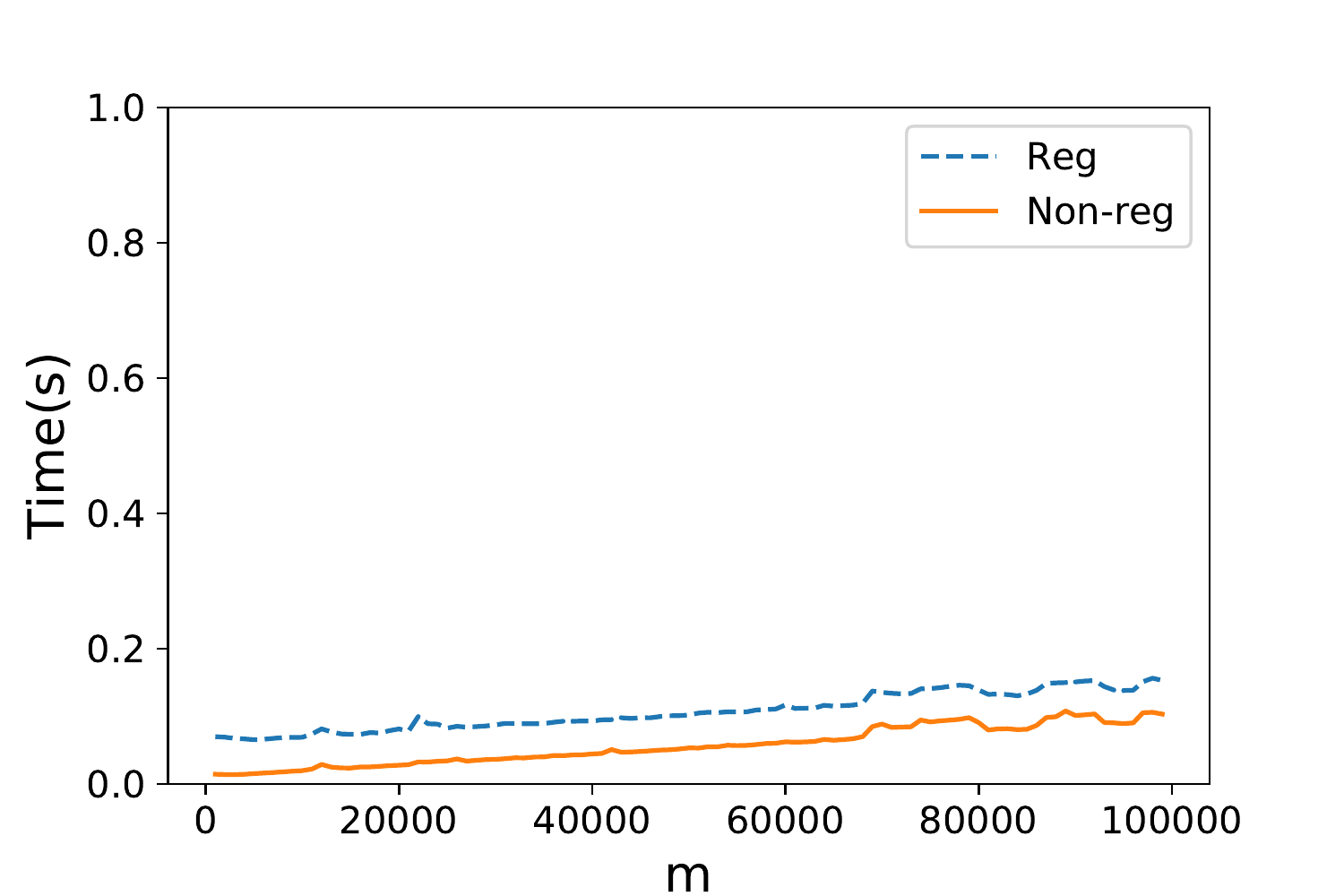}\hfill
  \includegraphics[width=.5\linewidth]{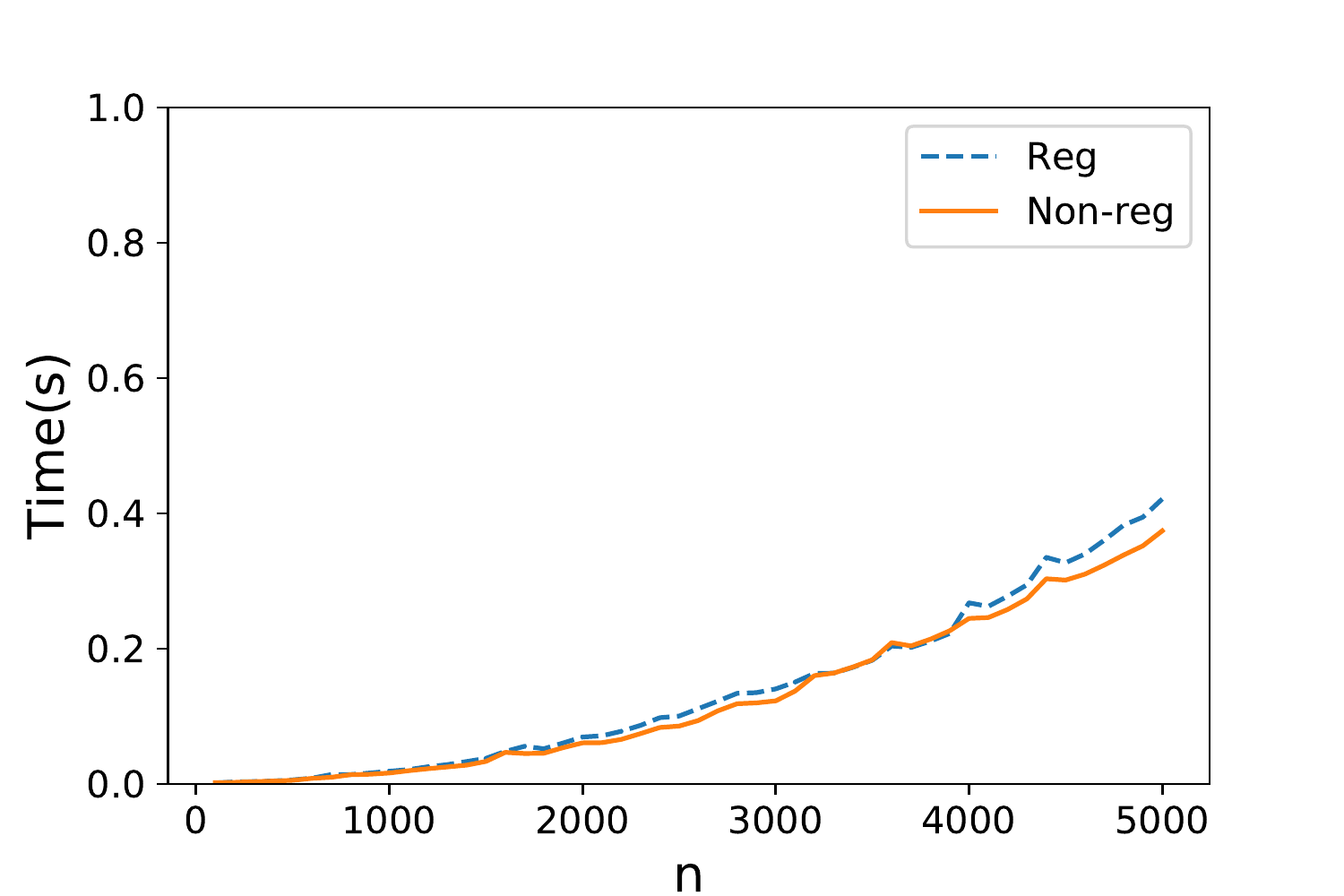}\hfill
  \end{minipage}}\par
    \subfloat[]{%
  \begin{minipage}{\linewidth}
    \centering
    \includegraphics[width=.5\linewidth]{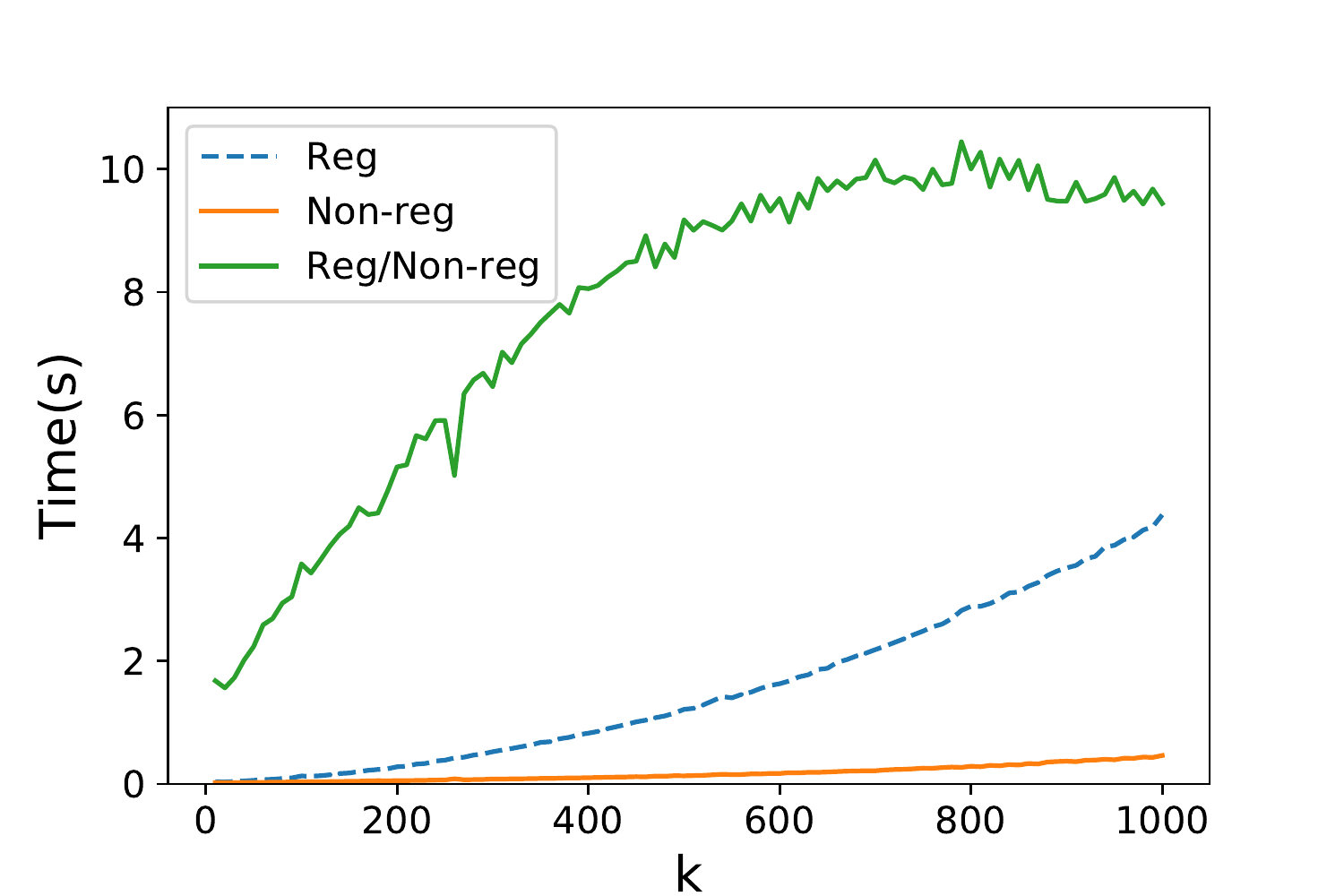}\hfill
  \end{minipage}}\par
    \caption{Running times with respect to the different input
      parameters. The ratio between the running time of the two
      algorithms is shown in the plot for the value of $k$.}
    \label{fig:time}
\end{figure}

\begin{table}[!hbt]
  \centering
  \caption{Average pairwise Jaccard index and loss.}
\begin{tabular}{|c|| c|c|c| c|}
\hline
& \multicolumn{3}{|c|}{$\bar J$} &  \\
\hline
Dataset & $\lambda=0$ & $\lambda=1$ & $\lambda=10$ & $\mathbb E[J]$ \\
\hline
%COIL-100 & 0.056 & 0.19 & 0.253 & 0.003
%\\ \hline
ORL & 0.177 & 0.245 & 0.408 & 0.012
\\ \hline
MNIST & 0.278 & 0.951 & 0.774 & 0.068
\\ \hline
YaleB & 0.252 & 0.632 & 0.852 & 0.051
\\ \hline
OnlineNews & 0.782 & 1.0 & 1.0  & 0.336
\\ \hline
Isolet & 0.422 & 0.6 & 0.838  & 0.088
\\ \hline
\end{tabular}
\label{tab:stability}
\end{table}

\section{Conclusions \& future work}
In this paper we have presented a novel formulation of the Column
Subset Selection Problem that incorporates a regularization term. We
have derived an efficient algorithm to greedily optimize it,
and have demonstrated its potential through various experiments. In
addition, we have discussed how this formulation can be inadequate for feature selection and have proposed an alternative that solves
this problem. Finally, we have derived a lower bound for the error of the proposed problem. We believe that these new problem formulations open exploration directions with regards to column subset
selection. The advantages of using the proposed algorithm in practice
have been demostrated by a wide variety of experiments. In future work, it would be interesting to study the impact of the value of $\lambda$ on the generalization ability of the resulting models, and whether significant improvement can be
gained by fine-tuning. Additionally, it would be interesting to know whether optimal values can be derived making distributional assumptions with respect to the input data. Finally, it would be
interesting to study the possibility of deriving approximation guarantees for the greedy and other algorithms.

%\acks{The research leading to these results has received
%    funding from the European Union under the H2020 grant agreement n. 671625
%    (project CogNet). }

%% The Appendices part is started with the command \appendix;
%% appendix sections are then done as normal sections
%% \appendix

\appendix
\section*{Appendix A.}
Here we show more detailed derivations of some of the equalities in the paper.

Equality \ref{eq:update_At}:
\begin{align*}
  & A^{(t+1)} =   C_w( C_w^T  C_w + \lambda I)^{-1}   C_w^TA 
  \\ = ~&  (C\mbox{ } w)(C_w^T C_w + \lambda I)^{-1}   (C\mbox{ } w)^TA 
  \\ = ~& C(C^TC+\lambda I)^{-1}C^TA + C\frac{vv^T}{\alpha_w}C^TA  -  w\frac{v^T}{\alpha_w}C^TA - C\frac{vw^T}{\alpha_w}A + \frac{ww^T}{\alpha_w}A 
  \\ = ~& A^{(t)} + \frac{(A_{:w}A^T_{:w})^{(t)}}{\alpha_w}A  -  w\frac{(A^T_{:w})^{(t)}}{\alpha_w}A - \frac{A^{(t)}_{:w}w^T}{\alpha_w}A + \frac{ww^T}{\alpha_w}A 
  \\ = ~& A^{(t)} + \frac{1}{\alpha_w}\left ( (A^{(t)}_{:w} - w  )(A^{(t)}_{:w} - w )^T \right )A
\end{align*}
Equality \ref{eq:preargmin}:
\begin{align*}
&   \argmin \|A-A^{(t+1)}\|_F^2 
\\=& \argmin_i tr \left ((A-A^{(t+1)})^T(A-A^{(t+1)}) \right ) 
\\ =& \argmin_i tr (A^TA) - tr(A^TA^{(t+1)}) - tr((A^T)^{(t+1)}A^T) + tr((A^TA)^{(t+1)})
\\ =& \argmin_i tr (A^TA) - tr(A^T(A^{(t)}+\frac{1}{\alpha_i}d_id_i^TA)) 
\\&- tr((A^{(t)}+\frac{1}{\alpha_i}d_id_i^TA)^TA) + tr((A^{(t)}+\frac{1}{\alpha_i}d_id_i^TA)^T(A^{(t)}+\frac{1}{\alpha_i}d_id_i^TA))
\\ =& \argmin_i tr (A^TA) - tr(A^TA^{(t)}) - tr(\frac{1}{\alpha_i}A^Td_id_i^TA)) 
\\&- tr((A^T)^{(t)}A) - tr(\frac{1}{\alpha_i}A^Td_id_i^TA) 
\\&+ tr((A^T)^{(t)}A^{(t)}) +  tr(\frac{1}{\alpha_i}(A^T)^{(t)}d_id_i^TA) + tr(\frac{1}{\alpha_i}A^Td_id_i^TA^{(t)})
\\&+ tr(\frac{1}{\alpha_i^2}A^Td_id_i^Td_id_i^TA) 
\end{align*}

Equality \ref{eq:argmin}:
\begin{align*}  
  &\argmin \|A-A^{(t+1)}\|_F^2 
\\=& -2tr(\frac{1}{\alpha_i}A^Td_id_i^TA) +2tr(\frac{1}{\alpha_i}(A^T)^{(t)}d_id_i^TA) + tr(\frac{1}{\alpha_i^2}A^Td_id_i^Td_id_i^TA) 
\\=& \argmin_i \frac{2}{\alpha_i}tr((A^T)^{(t)}d_id_i^TA) -  \frac{2}{\alpha_i}tr(A^Td_id_i^TA) + \frac{1}{\alpha_i^2}\|d_id_i^TA\|_F^2 
\\=& \argmin_i \frac{2}{\alpha_i}\tilde x_i^T  x_i -  \frac{2}{\alpha_i}\|x_i\|_2^2 + \frac{1}{\alpha_i^2}\|d_i\|_2^2\|x_i\|_2^2 
\\=& \argmin_i \frac{2}{\alpha_i}(\tilde x_i-x_i)^Tx_i + \frac{1}{\alpha_i^2}\|x_i\|_2^2(\tilde x_{ii} - x_{ii}) 
\\=& \argmin_i \frac{2}{\alpha_i}\tilde x_i^Tx_i +    (\frac{-2}{\alpha_i} +  \frac{1}{\alpha_i^2}(\tilde x_{ii} - x_{ii})  )x_i^Tx_i + \lambda
\end{align*}

Equality \ref{eq:update_atat}
\begin{align*}
A^TA^{(t+1)} =& A^T (A^{(t)} + \frac{1}{\alpha_w} d_wd_w^TA)
\\ =& A^TA^{(t)} + \frac{1}{\alpha_w} A^Td_wd_w^TA
\\ =& A^TA^{(t)} + \frac{1}{\alpha_w}  x_w x_w^T
\end{align*}

Equality \ref{eq:update_ata}
\begin{align*}
(A^TA)^{(t+1)} =& (A^{(t)} + \frac{1}{\alpha_w} d_wd_w^TA  )^T (A^{(t)} + \frac{1}{\alpha_w} d_wd_w^TA)  
\\ =& (A^TA)^{(t)} + \frac{1}{\alpha_w} (A^T)^{(t)}d_wd_w^TA + \frac{1}{\alpha_w} A^Td_wd_w^TA^{(t)} + \frac{1}{\alpha_w^2} A^Td_wd_w^Td_wd_w^TA  
\\ =& (A^TA)^{(t)} + \frac{1}{\alpha_w} \tilde x_w x_w^T + \frac{1}{\alpha_w} x_w\tilde x_w^T + \frac{1}{\alpha_w^2} x_wd_w^Td_wx_w^T
\end{align*}

Equality \ref{eq:update_x}
\begin{align*}
 X^{(t+1)} & = A^TD^{(t+1)}  
\\ & = A^TA^{(t+1)}-A^TA 
\\ & = A^TA^{(t)} + \frac{1}{\alpha_w} x_w x_w^T -A^TA  
\\ & = A^TD^{(t)} + A^TA + \frac{1}{\alpha_w} x_w x_w^T -A^TA  
\\ & = X^{(t)} + \frac{1}{\alpha_w} x_w x_w^T = X^{(0)} + \sum_{i=0}^t \left (\frac{1}{a_w}x_wx_w^T \right )^{(i)}
\end{align*}

Equality \ref{eq:update_xt}
\begin{align*}
\tilde X^{(t+1)} & = (A^TD)^{(t+1)}  
\\ & = (A^TA)^{(t+1)}-(A^T)^{(t+1)}A 
\\ & = (A^TA)^{(t)} + \frac{1}{\alpha_w} \tilde x_w x_w^T + \frac{1}{\alpha_w} x_w\tilde x_w^T + \frac{1}{\alpha_w^2} x_wd_w^Td_wx_w^T - (A^T)^{(t)}A - \frac{1}{\alpha_w} x_w  x_w^T 
\\ & = (A^TA)^{(t)}  - (A^T)^{(t)}A + \frac{1}{\alpha_w} \tilde x_w x_w^T + \frac{1}{\alpha_w} x_w\tilde x_w^T +  \frac{1}{\alpha_w^2} x_wd_w^Td_wx_w^T - \frac{1}{\alpha_w} x_w  x_w^T 
\\ & = (A^TD)^{(t)} + \frac{1}{\alpha_w} \tilde x_w x_w^T + \frac{1}{\alpha_w} x_w\tilde x_w^T + \frac{1}{\alpha_w^2} x_wd_w^Td_wx_w^T  - \frac{1}{\alpha_w} x_w  x_w^T 
\\ & = X^{(0)} +  \sum_{i=0}^t \left (\frac{1}{\alpha_w} \tilde x_w x_w^T + \frac{1}{\alpha_w} x_w\tilde x_w^T + \frac{1}{\alpha_w^2} x_wd_w^Td_wx_w^T  - \frac{1}{\alpha_w} x_w  x_w^T \right )^{(i)}
\end{align*}

%% \section{}
%% \label{}

%% References
%%
%% Following citation commands can be used in the body text:
%% Usage of \cite is as follows:
%%   \cite{key}         ==>>  [#]
%%   \cite[chap. 2]{key} ==>> [#, chap. 2]
%%

%% References with BibTeX database:

Declarations of interest: none.

\bibliographystyle{elsarticle-num}
\bibliography{citations}

\begin{thebibliography}{10}
\expandafter\ifx\csname url\endcsname\relax
  \def\url#1{\texttt{#1}}\fi
\expandafter\ifx\csname urlprefix\endcsname\relax\def\urlprefix{URL }\fi
\expandafter\ifx\csname href\endcsname\relax
  \def\href#1#2{#2} \def\path#1{#1}\fi

\bibitem{ccivril2014column}
A.~{\c{C}}ivril, Column subset selection problem is ug-hard, Journal of
  Computer and System Sciences 80~(4) (2014) 849--859.

\bibitem{shitov2017column}
Y.~Shitov, Column subset selection is np-complete, arXiv preprint
  arXiv:1701.02764.

\bibitem{altschuler2016greedy}
J.~Altschuler, A.~Bhaskara, G.~Fu, V.~Mirrokni, A.~Rostamizadeh,
  M.~Zadimoghaddam, Greedy column subset selection: New bounds and distributed
  algorithms, in: International Conference on Machine Learning, 2016, pp.
  2539--2548.

\bibitem{farahat2011efficient}
A.~K. Farahat, A.~Ghodsi, M.~S. Kamel, An efficient greedy method for
  unsupervised feature selection, in: Data Mining (ICDM), 2011 IEEE 11th
  International Conference on, IEEE, 2011, pp. 161--170.

\bibitem{he2005laplacian}
X.~He, D.~Cai, P.~Niyogi, Laplacian score for feature selection, in: Advances
  in neural information processing systems, 2005, pp. 507--514.

\bibitem{zhao2007spectral}
Z.~Zhao, H.~Liu, Spectral feature selection for supervised and unsupervised
  learning, in: Proceedings of the 24th international conference on Machine
  learning, ACM, 2007, pp. 1151--1157.

\bibitem{xu2010discriminative}
Z.~Xu, I.~King, M.~R.-T. Lyu, R.~Jin, Discriminative semi-supervised feature
  selection via manifold regularization, IEEE Transactions on Neural networks
  21~(7) (2010) 1033--1047.

\bibitem{cai2010unsupervised}
D.~Cai, C.~Zhang, X.~He, Unsupervised feature selection for multi-cluster data,
  in: Proceedings of the 16th ACM SIGKDD international conference on Knowledge
  discovery and data mining, ACM, 2010, pp. 333--342.

\bibitem{zhao2010efficient}
Z.~Zhao, L.~Wang, H.~Liu, et~al., Efficient spectral feature selection with
  minimum redundancy., in: AAAI, 2010, pp. 673--678.

\bibitem{yang2011l2}
Y.~Yang, H.~T. Shen, Z.~Ma, Z.~Huang, X.~Zhou, l2, 1-norm regularized
  discriminative feature selection for unsupervised learning, in: IJCAI
  proceedings-international joint conference on artificial intelligence,
  Vol.~22, 2011, p. 1589.

\bibitem{li2012unsupervised}
Z.~Li, Y.~Yang, J.~Liu, X.~Zhou, H.~Lu, et~al., Unsupervised feature selection
  using nonnegative spectral analysis., in: AAAI, Vol. 2012, 2012, pp.
  1026--1032.

\bibitem{zhao2013similarity}
Z.~Zhao, L.~Wang, H.~Liu, J.~Ye, On similarity preserving feature selection,
  IEEE Transactions on Knowledge and Data Engineering 25~(3) (2013) 619--632.

\bibitem{hou2011feature}
C.~Hou, F.~Nie, D.~Yi, Y.~Wu, Feature selection via joint embedding learning
  and sparse regression, in: IJCAI Proceedings-International Joint Conference
  on Artificial Intelligence, Vol.~22, 2011, p. 1324.

\bibitem{he20122}
R.~He, T.~Tan, L.~Wang, W.-S. Zheng, l 2, 1 regularized correntropy for robust
  feature selection, in: Computer Vision and Pattern Recognition (CVPR), 2012
  IEEE Conference on, IEEE, 2012, pp. 2504--2511.

\bibitem{qian2013robust}
M.~Qian, C.~Zhai, Robust unsupervised feature selection., in: IJCAI, 2013, pp.
  1621--1627.

\bibitem{hou2014joint}
C.~Hou, F.~Nie, X.~Li, D.~Yi, Y.~Wu, Joint embedding learning and sparse
  regression: A framework for unsupervised feature selection, IEEE Transactions
  on Cybernetics 44~(6) (2014) 793--804.

\bibitem{wang2015embedded}
S.~Wang, J.~Tang, H.~Liu, Embedded unsupervised feature selection., in: AAAI,
  2015, pp. 470--476.

\bibitem{du2015unsupervised}
L.~Du, Y.-D. Shen, Unsupervised feature selection with adaptive structure
  learning, in: Proceedings of the 21th ACM SIGKDD International Conference on
  Knowledge Discovery and Data Mining, ACM, 2015, pp. 209--218.

\bibitem{wang2015unsupervised}
S.~Wang, W.~Pedrycz, Q.~Zhu, W.~Zhu, Unsupervised feature selection via maximum
  projection and minimum redundancy, Knowledge-Based Systems 75 (2015) 19--29.

\bibitem{chan1987rank}
T.~F. Chan, Rank revealing qr factorizations, Linear algebra and its
  applications 88.

\bibitem{frieze2004fast}
A.~Frieze, R.~Kannan, S.~Vempala, Fast monte-carlo algorithms for finding
  low-rank approximations, Journal of the ACM (JACM) 51~(6) (2004) 1025--1041.

\bibitem{mahoney2009cur}
M.~W. Mahoney, P.~Drineas, Cur matrix decompositions for improved data
  analysis, Proceedings of the National Academy of Sciences 106~(3) (2009)
  697--702.

\bibitem{boutsidis2009improved}
C.~Boutsidis, M.~W. Mahoney, P.~Drineas, An improved approximation algorithm
  for the column subset selection problem, in: Proc. of the 20th Annual
  ACM-SIAM Symp. on Discrete Algorithms, Soc. for Industrial and Applied
  Mathematics, 2009, pp. 968--977.

\bibitem{boutsidis2014near}
C.~Boutsidis, P.~Drineas, M.~Magdon-Ismail, Near-optimal column-based matrix
  reconstruction, SIAM Journal on Computing 43~(2) (2014) 687--717.

\bibitem{civril2012column}
A.~Civril, M.~Magdon-Ismail, Column subset selection via sparse approximation
  of svd, Theoretical Computer Science 421 (2012) 1--14.

\bibitem{ordozgoiti2016fast}
B.~Ordozgoiti, S.~G. Canaval, A.~Mozo, A fast iterative algorithm for improved
  unsupervised feature selection, in: Data Mining (ICDM), 2016 IEEE 16th
  International Conference on, IEEE, 2016, pp. 390--399.

\bibitem{ordozgoiti2017iterative}
B.~Ordozgoiti, S.~G. Canaval, A.~Mozo, Iterative column subset selection,
  Knowledge and Information Systems (2017) 1--30.

\bibitem{lutkepohl1997handbook}
H.~Lutkepohl, Handbook of matrices., Computational Statistics and Data Analysis
  2~(25) (1997) 243.

\bibitem{thompson1972principal}
R.~C. Thompson, Principal submatrices ix: Interlacing inequalities for singular
  values of submatrices, Linear Algebra and its Applications 5~(1) (1972)
  1--12.

\bibitem{fanty1991spoken}
M.~Fanty, R.~Cole, Spoken letter recognition, in: Advances in Neural
  Information Processing Systems, 1991, pp. 220--226.

\bibitem{lecun2010mnist}
Y.~LeCun, C.~Cortes, C.~J. Burges, Mnist handwritten digit database, AT\&T Labs
  [Online]. Available: http://yann. lecun. com/exdb/mnist 2.

\bibitem{georghiades2001few}
A.~S. Georghiades, P.~N. Belhumeur, D.~J. Kriegman, From few to many:
  Illumination cone models for face recognition under variable lighting and
  pose, Pattern Analysis and Machine Intelligence, IEEE Transactions on 23~(6)
  (2001) 643--660.

\bibitem{samaria1994parameterisation}
F.~S. Samaria, A.~C. Harter, Parameterisation of a stochastic model for human
  face identification, in: Applications of Computer Vision, 1994., Proceedings
  of the Second IEEE Workshop on, IEEE, 1994, pp. 138--142.

\bibitem{nene1996columbia}
S.~A. Nene, S.~K. Nayar, H.~Murase, Columbia object image library (coil-20),
  Tech. Rep. CUCS-005-96, Columbia University (February 1996).

\bibitem{fernandes2015proactive}
K.~Fernandes, P.~Vinagre, P.~Cortez, A proactive intelligent decision support
  system for predicting the popularity of online news, in: Progress in
  Artificial Intelligence, Springer, 2015, pp. 535--546.

\end{thebibliography}

%\section*{Acknowledgements}
%The research leading to these results has received
%    funding from the European Union under the H2020 grant agreement n. 671625
%    (project CogNet).

%% Authors are advised to use a BibTeX database file for their reference list.
%% The provided style file elsarticle-num.bst formats references in the required Procedia style

%% For references without a BibTeX database:

% \begin{thebibliography}{00}

%% \bibitem must have the following form:
%%   \bibitem{key}...
%%

% \bibitem{}

% \end{thebibliography}

\end{document}